\definecolor{mydarkblue}{rgb}{0,0.08,0.45}
\definecolor{myred}{rgb}{0.84,0.17,0.11}
\definecolor{mygreen}{rgb}{0.35,0.60,0.25}
\definecolor{myblue}{rgb}{0.19,0.44,0.72}
\theoremstyle{plain}
\newtheorem{theorem}{Theorem}[section]
\newtheorem{proposition}[theorem]{Proposition}
\newtheorem{lemma}[theorem]{Lemma}
\theoremstyle{definition}
\theoremstyle{remark}
\def\eqref#1{equation~\ref{#1}}
\def\1{\bm{1}}
\def\rva{{\mathbf{a}}}
\def\rvs{{\mathbf{s}}}
\DeclareMathAlphabet{\mathsfit}{\encodingdefault}{\sfdefault}{m}{sl}
\SetMathAlphabet{\mathsfit}{bold}{\encodingdefault}{\sfdefault}{bx}{n}
\newcommand{\KL}{D_{\mathrm{KL}}}
\icmltitlerunning{Offline-Boosted Actor-Critic}
\begin{document}

\twocolumn[
\icmltitle{Offline-Boosted Actor-Critic: \\ Adaptively Blending Optimal Historical Behaviors in Deep Off-Policy RL}

\icmlsetsymbol{equal}{*}

\begin{icmlauthorlist}
\icmlauthor{Yu Luo}{thu}
\icmlauthor{Tianying Ji}{thu}
\icmlauthor{Fuchun Sun}{thu}
\icmlauthor{Jianwei Zhang}{hum}
\icmlauthor{Huazhe Xu}{cha,qzz,AIL}
\icmlauthor{Xianyuan Zhan}{AIL,air}
%\icmlauthor{}{sch}
%\icmlauthor{}{sch}
\end{icmlauthorlist}

\icmlaffiliation{thu}{Department of Computer Science and Technology, Tsinghua University}
\icmlaffiliation{hum}{Department of Informatics, University of Hamburg}
\icmlaffiliation{AIL}{Shanghai Artificial Intelligence Laboratory}
\icmlaffiliation{cha}{Institute for Interdisciplinary Information Sciences, Tsinghua University}
\icmlaffiliation{qzz}{Shanghai Qi Zhi Institute}
\icmlaffiliation{air}{Institute for AI Industry Research, Tsinghua University}

\icmlcorrespondingauthor{Fuchun Sun}{fcsun@tsinghua.edu.cn}

\icmlkeywords{Machine Learning, ICML}

\vskip 0.3in
]

\printAffiliationsAndNotice{}

\begin{abstract}
Off-policy reinforcement learning (RL) has achieved notable success in tackling many complex real-world tasks,
by leveraging previously collected data for policy learning. However, 
most existing off-policy RL algorithms fail to maximally exploit the information in the replay buffer, limiting sample efficiency and policy performance. In this work, we discover that concurrently training an offline RL policy based on the shared online replay buffer can sometimes outperform the original online learning policy, though the occurrence of such performance gains remains uncertain.
This motivates a new possibility of harnessing the emergent outperforming offline optimal policy to improve online policy learning.
Based on this insight, we present Offline-Boosted Actor-Critic (OBAC), a model-free online RL framework that elegantly identifies the outperforming offline policy through value comparison, and uses it as an adaptive constraint to guarantee stronger policy learning performance.
Our experiments demonstrate that OBAC outperforms other popular model-free RL baselines and rivals advanced model-based RL methods in terms of sample efficiency and asymptotic performance across \textbf{53} tasks spanning \textbf{6} task suites\footnote{Please refer to \textcolor{blue}{\url{https://roythuly.github.io/OBAC_web/}} for experiment videos and benchmark results.}.
\end{abstract}

\section{Introduction}
Online model-free deep reinforcement learning (RL) methods have achieved success in many challenging sequential 
\begin{figure}[H]
    \centering
    \includegraphics[width=\linewidth]{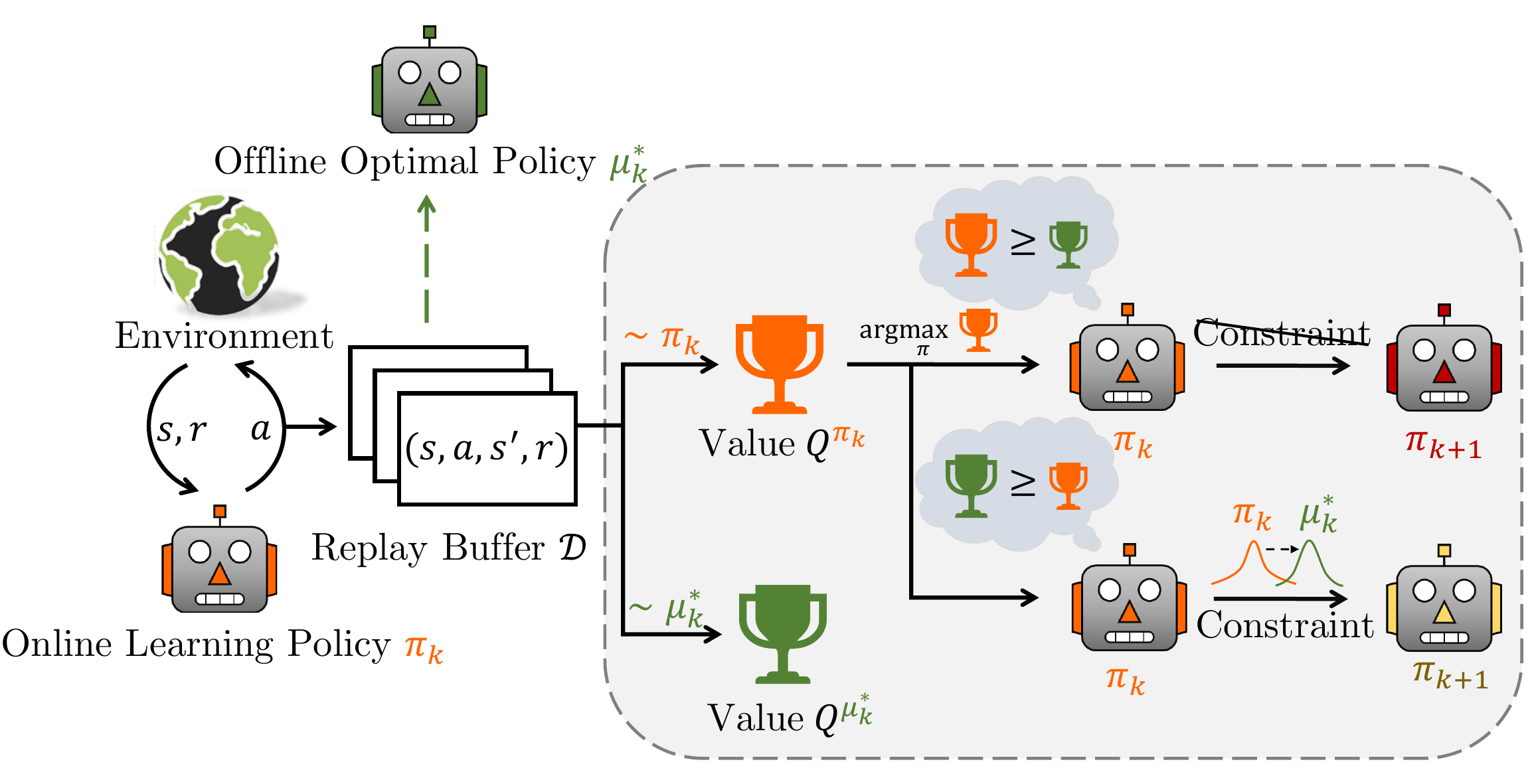}
    \includegraphics[width=\linewidth]{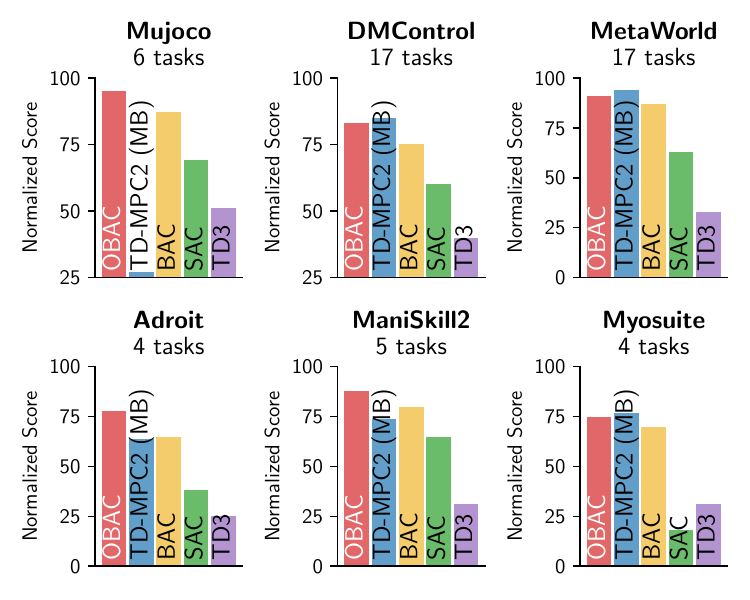}
    \caption{\textbf{Overview}. \textit{(Top)}: we illustrate the framework of OBAC, where the concurrent offline optimal policy can boost the online learning policy with an adaptive constraint mechanism. \textit{(Bottom)}: comparison of normalized score. Our OBAC can be comparable with advanced model-based RL method TD-MPCs, and outperform several popular model-free RL methods BAC, SAC and TD3.}
    \label{fig:overview}
\end{figure}
decision-making tasks~\cite{mnih2015human,van2016deep,wang2022deep}, 
including gaming AI~\cite{perolat2022mastering}, chip design~\cite{mirhoseini2021graph}, and automatic driving~\cite{kiran2021deep}. Many of these advances are attributed to off-policy RL methods~\cite{kallus2020double}, that enable agents to leverage collected data from historical policies to train the current policy. However, the reliance on millions of environment interaction steps still hampers the real-world deployment of RL~\cite{haarnoja2018soft}. 
We boil the algorithmic inefficiency down to their insufficient data utilization: 
when performing policy evaluation for value function learning and policy extraction via value maximization, 
most algorithms neglect and thus fail to leverage the inherent patterns and knowledge from the heterogeneous data in the replay buffer.

One cure for the inefficient data utilization of RL methods is model-based RL~\cite{moerland2023model}: learning an environmental dynamics model as the reservoir of domain knowledge, by generating new pseudo-samples for Dyna-style~\cite{ji2022update} or planning-style~\cite{hansen2023td} policy learning.
However, these approaches can be computationally complex and sometimes brittle due to the use of imperfect explicit model learning and long propagation chains.
Alternatively, offline RL~\cite{levine2020offline,kumar2020conservative,kostrikov2021offline} provides a new possibility by
allowing the learning of an optimal policy and the corresponding value function from fixed datasets without interacting with the environment. From an online RL perspective, leveraging such an offline learned policy and its offline value offers two advantages: \textit{(i)} the offline learned policy, a blend of optimal historical behaviors, can serve as an explicit performance baseline for current online policy optimization; and \textit{(ii)} the pessimistic training scheme~\cite{kostrikov2021offline,xu2022offline} enables in-distribution value estimation and offline policy learning, preventing bias propagation issues seen in model-based RL.

Several prior studies have explored the use of offline RL to enhance online off-policy RL training, generally falling into two directions, each exploiting key advantages of offline RL: \textit{(i)} incorporating an additional offline dataset for sampling augmentation~\cite{song2022hybrid,wagenmaker2023leveraging,ball2023efficient}, however, this may be expensive and easily impacted by data quality; and \textit{(ii)} without the offline dataset, learning an optimal offline value function from collected data (\emph{e.g.}, replay buffer) to adjust the online value function accordingly (\emph{e.g.}, via linear interpolation)~\cite{zhang2022replay,ji2023seizing,xu2023drm}. However, this approach may result in inaccurate policy evaluation, particularly when the offline policy in the replay buffer is of low quality, thus na\"ively mixing the offline value sometimes can be ineffective or even harmful.
Although we do observe in this work that the optimal offline policy learned from the replay buffer can often outperform the online policy, the occurrence of such superiority varies across tasks as well as different training stages. 
Yet, despite numerous previous attempts, a unified understanding and framework for leveraging offline RL for effective online off-policy RL
remains lacking.
This raises the following questions: \textit{When and how can we effectively leverage offline RL to ensure improvement in online off-policy RL?}

In this work, we introduce \textbf{O}ffline-\textbf{B}oosted \textbf{A}ctor-\textbf{C}ritic (\textbf{OBAC}) to address the above questions, providing a new solution to leverage offline RL for adaptively blending optimal historical behaviors in online off-policy RL, as shown in Figure~\ref{fig:overview}. To tackle the ``when'' issue,
we compare the evaluated state-values of the online learning policy and the offline optimal policy to identify the superior offline optimal policy.
To address the ``how'' challenge, we derive an adaptive mechanism that utilizes the superior offline optimal policy as a constraint to guide policy optimization.
In short,  OBAC can accumulate small performance gains when the offline optimal policy is better than the online learning policy, forming a positive cycle that a better offline policy helps online policy explore better data in return enhancing both policies in the next update, finally leading to significant overall performance improvement.
Notably, to circumvent the computational complexity \emph{w.r.t} explicitly learning the offline optimal policy—a similar issue seen in model-based RL when learning a dynamics model—we make a key technical contribution by introducing implicit offline policy learning in both evaluation and improvement steps, resulting in a cost-effective practical algorithm.

We evaluate our method across \textbf{53} diverse continuous control tasks spanning \textbf{6} domains: Mujoco~\cite{todorov2012mujoco}, DMControl~\cite{tassa2018deepmind}, Meta-World~\cite{yu2020meta}, Adroit~\cite{kumar2015mujoco}, Myosuite~\cite{caggiano2022myosuite}, and Maniskill2~\cite{gu2022maniskill2}, comparing it with BAC~\cite{ji2023seizing}, TD-MPC2~\cite{hansen2023td}, SAC~\cite{haarnoja2018soft}, and TD3~\cite{fujimoto2018addressing}. Our results, summarized in Figure 1, showcase OBAC's superiority. It outperforms BAC, the first documented effective model-free algorithm on challenging high-dimensional dog locomotion series tasks, by adjusting $Q$ values with offline values. When compared with TD-MPC2, a state-of-the-art model-based planning method
known for efficiency in various tasks, OBAC demonstrates comparable performance with only $\mathbf{50\%}$ of the parameters and $\mathbf{20\%}$ less training time\footnote{We have released our code here: \textcolor{blue}{\url{https://github.com/Roythuly/OBAC}}}. 

\section{Preliminaries}
We consider the conventional Markov Decision Process (MDP)~\cite{bellman1957markovian} defined by a 6-tuple $\mathcal{M}=\left\langle\mathcal{S},\mathcal{A},\mathcal{P},r,\gamma,d_0\right\rangle$, where $\mathcal{S}\in\mathbb{R}^n$ and $\mathcal{A}\in\mathbb{R}^m$ represent the continuous state and action spaces, $\mathcal{P}(s'|s,a):\mathcal{S}\times\mathcal{A}\rightarrow\Delta(\mathcal{S})$ denotes a Markovian transition (dynamics) distribution, $r(s,a):\mathcal{S}\times\mathcal{A}\rightarrow\Delta(\mathbb{R})$ is a stochastic reward function, $\gamma\in[0,1)$ gives the discounted factor for future rewards, and $d_0$ is the initial state distribution. The RL agent's objective is to find a policy $\pi(a|s):\mathcal{S}\rightarrow\Delta(\mathcal{A})$ that maximizes the discounted cumulative reward from the environment, $J_\pi=\mathbb{E}_{d_0,\pi,\mathcal{P}}\left[\sum_{t=0}^\infty\gamma^t r(s_t,a_t)\right]$.

We focus on the off-policy RL setting, where the agent interacts with the environment, collects new data into a replay buffer $\mathcal{D}\leftarrow\mathcal{D}\cup\{(s,a,s',r)\}$, and updates the learning policy using the stored data. At the $k$-th iteration step, the online learning policy is denoted as $\pi_k$, with its corresponding $Q$ value function
\begin{equation}
Q^{\pi_k}(s,a)=\mathbb{E}_{\pi_k,\mathcal{P}}\left[\sum_{t=0}^\infty\gamma^t r(s_t,a_t)|s_0=s,a_0=a\right]
\end{equation}
and the value function $V^{\pi_k}(s)=\mathbb{E}_{a\sim\pi_k}\left[Q^{\pi_k}(s,a)\right]$. 

Considering the replay buffer as a given dataset allows us to derive a concurrent offline optimal policy $\mu^*_k(a|s)$, given by
\begin{equation}\label{offline_policy_def}
\mu^*_k\triangleq\arg\max\mathbb{E}_{a\sim\mathcal{D}}\left[Q^{\mu_k}(s,a)\right].
\end{equation}
Unlike previous off-policy RL methods, we simultaneously train an online learning policy $\pi_k$ and an offline optimal policy $\mu^*_k$ by sharing a communal replay buffer $\mathcal{D}$. A key property of $\mu^*_k$ is its strong relevance to dataset distribution~\cite{fujimoto2019off,kumar2020conservative}, which means, action derived from it are restricted in support of actions in the replay buffer, $a\sim\mu^*_k\Rightarrow a\in\mathcal{D}$, while the online learning policy can have unrestricted action choices, $a\sim\pi_k\Rightarrow a\in\mathcal{A}$. Thus, $\mu^*_k$ characterizes the historical optimal behavior from the mixture of collected data, serving as an explicit and reliable performance baseline for $\pi_k$. Despite its conceptual simplicity, little previous work has introduced such a baseline for online learning optimization, resulting in wasted knowledge exploitation and sample inefficiency.

\section{Offline-Boosted Off-Policy RL}
In this section, we introduce Offline-Boosted Actor-Critic (OBAC), a framework aimed at improving the performance of online learning policies through the incorporation of an offline optimal policy. To gain insights into the behavior of such an offline optimal policy during online concurrent training, we conduct a set of thorough experiments, revealing its potential for outperforming the online learning policy. However, the timing of this superiority proves to be task-dependent and varies across different training stages. Taking this property into consideration, we detail how we adaptively integrate the offline optimal policy into alternating policy evaluation and optimization steps in off-policy RL paradigm. Following it up, we present a practical model-free RL algorithm based on the general actor-critic framework, achieving low computational cost and high sample efficiency.

\subsection{A Motivating Example}
In general, offline RL adheres to a principle of pessimism in policy training, aiming to prevent extrapolation errors in Q-value estimation by avoiding Out-Of-Distribution (OOD) actions~\cite{fujimoto2019off, xie2021bellman, shi2022pessimistic}. However, these concerns typically stem from training policies on fixed datasets. In contrast, during the online RL training process, as new samples are continuously collected through interaction with the environment, both the \textit{quantity} and \textit{quality} of data in the replay buffer grow dynamically. These characteristics break the limitations inherent in offline RL, prompting us to explore better performance by integrating historical optimal behavior as more data becomes available. To explore these properties, we conduct investigations across three OpenAI Gym environments (Hopper-v3, Walker2d-v3, Ant-v3), employing three different agents:

\noindent\textbf{Pure off-policy agent.}\quad
We train a Soft Actor-Critic (\textit{SAC}) agent~\cite{haarnoja2018soft} through $1$ million steps of environmental interaction. At each time step, new data is accumulated in a dedicated replay buffer $\mathcal{D}_{SAC}$, updating the online learning policy.

\noindent\textbf{Concurrent offline agent.}\quad
Simultaneously with training the SAC agent, we employ the Implicit Q-Learning (IQL)~\cite{kostrikov2021offline}, an offline RL algorithm, to learn an offline optimal policy concurrently within the dynamically changing SAC buffer $\mathcal{D}_{SAC}$, referred to \textit{IQL Concurrent}. Notably, the concurrent IQL agent does not interact with the environment during whole training process.

\noindent\textbf{Online-training offline agent.}\quad
In an online setting, we use IQL. This involves the IQL agent interacting with the environment, collecting new experiences stored in a replay buffer $\mathcal{D}_{IQL}$, and updating its policy, denoted as \textit{IQL Online}. The training procedure aligns with that of the SAC agent but employs a different algorithm.

\begin{figure}[t]
    \centering
    \includegraphics[width=0.5\textwidth]{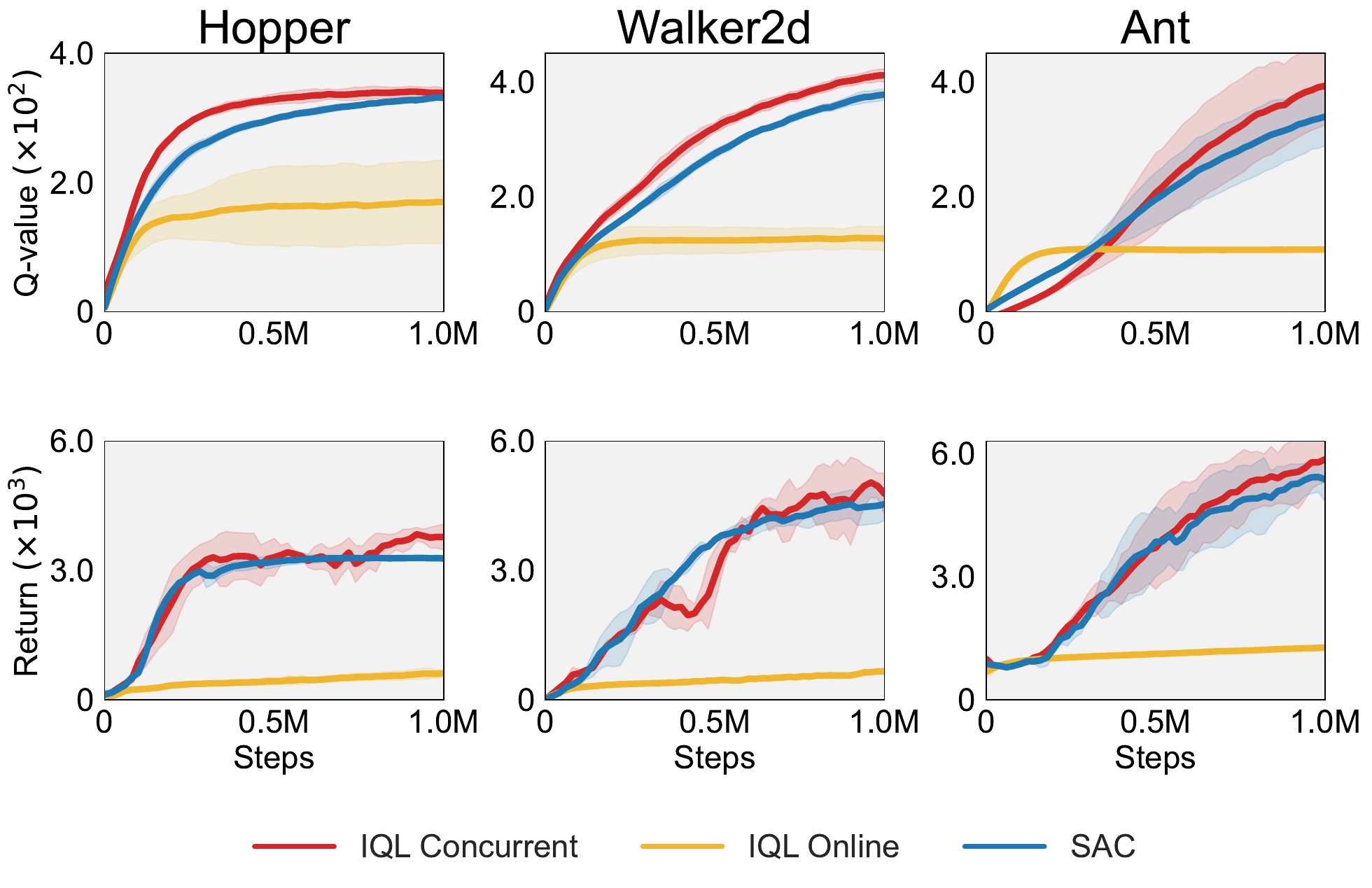}
    \caption{\textbf{Motivating example}. \textit{(Top)}: $Q$-value estimation, \textit{(Bottom)}: policy performance. We compare three agents including pure off-policy, concurrent offline and pure offline in online settings. The results demonstrate that the concurrent offline optimal policy can outperform the learning policy by sharing the replay buffer.}
    % \vspace{-15pt}
    \label{fig:motivation_example}
\end{figure}

We present the performance of each agent, alongside their $Q$-value estimations in Figure~\ref{fig:motivation_example}. In each task, when comparing the \textit{IQL concurrent} agent with the \textit{SAC} agent, we observe the potential superiority of the offline optimal policy over the online one, even though both share the same replay buffer explored by SAC. Considering the different actor training methods between IQL (by forward KL-divergence) and SAC (by reverse KL-divergence) may cause performance loss~\cite{chan2022greedification}, the Q-value comparison serves as a clearer indicator of the performance gap. These findings suggest that, contrary to its conservative reputation, offline RL can identify a potentially superior policy with a growing dataset when compared with off-policy RL. However, without the online policy buffer, even though the \textit{IQL Online} agent can collect new samples, it exhibits notable conservatism, leading to premature convergence in both $Q$-value estimation and overall performance.

Despite these interesting discoveries, the timing of the offline optimal policy's superiority is uncertain, varying across tasks. And even within a single task, it depends on the quality of online learning policy interactions. While some works~\cite{ji2023seizing,zhang2022replay} have utilized the offline optimal $Q$-value to regularize the $Q$-value of the online learning policy, the challenge arises when the offline optimal policy is inferior, potentially leading to harm to the online policy. Thus, the uncertainty in the timing of superiority makes it non-trivial to leverage the offline optimal policy effectively.

\subsection{Derivation of Offline-Boosted Policy Iteration}
To better determine the suitable timing for introducing $\mu^*_k$, we first individually evaluate both $\pi_k$ and $\mu^*_k$. Let $V^{\pi_k}(s)$ denote the state value function and $Q^{\pi_k}(s,a)$ represent the state-action value function of the online learning policy $\pi_k$. Similarly, let $V^{\mu^*_k}(s)$ and $Q^{\mu^*_k}(s,a)$ denote the value function and state-action value function for the offline optimal policy $\mu^*_k$. We exploit the Bellman Expectation Operator given by:
\begin{equation}\label{evalua_for_Q}
\mathcal{T}^\chi Q^\chi(s,a)=r(s,a)+\gamma\mathbb{E}_{s',a'\sim\chi}\left[Q^\chi(s',a')\right],
\end{equation}
\begin{equation}\label{V_evalua}
V^\chi(s)=\mathbb{E}_{a\sim\chi}\left[Q^\chi(s,a)\right], \quad \chi=\pi_k \ \text{or}\ \mu^*_k,
\end{equation}
within the replay buffer for the evaluation of $\pi_k$ and $\mu^*_k$. Since $\mathcal{T}^\chi$ is a $\gamma$-contraction mapping within a single policy evaluation step \emph{w.r.t} either $\pi_k$ or $\mu^*_k$~\cite{denardo1967contraction,bellemare2017distributional}, the (state-action) value function $Q^\chi(s,a)$ and $V^{\chi}(s)$ can be obtained by repeatedly applying $\mathcal{T}^\chi$.

Next, when performing policy improvement, we utilize the maximization of $Q^{\pi_k}(s,a)$ as the objective function, as it provides an unbiased evaluation of the current learning policy. In contrast with previous off-policy learning methods, we introduce the offline optimal policy as a guidance policy to assist in generating a new online learning policy. Specifically, using the value function $V^\chi(s)$ as the performance indicator, which measures the performance of policies at each state, we design the following adaptive mechanism for any state $s\in\mathcal{D}$:
\begin{itemize}[left=5pt]
\vspace{-8pt}
    \item When $V^{\pi_k}(s)\geq V^{\mu^*_k}(s)$, \emph{i.e.}, $\mathbb{E}_{a\sim\pi_k}[Q^{\pi_k}(s,a)]\geq\mathbb{E}_{a\sim\mu^*_k}[Q^{\mu^*_k}(s,a)]$, according to the definition of $\mu^*_k$~(\ref{offline_policy_def}), it implies that even using the optimal actions in the replay buffer, the online learning policy $\pi_k$ would still perform better than them. Thus, we can directly solve the objective function without the introduction of $\mu^*_k$, avoiding potential negative effects.
    \item When $V^{\pi_k}(s)\leq V^{\mu^*_k}(s)$, we can identify better actions in the replay buffer compared to the current learning policy. In this case, we consider adding a policy constraint to the objective function. This ensures that the updated policy not only optimizes the objective function, but also integrates the distribution of better actions from the offline optimal policy as guidance, thus leveraging the historical optimal behavior when it surpasses the online learning policy.
\vspace{-8pt}
\end{itemize}

Following this insight, we reconstruct the optimization problem in the policy improvement step as:
\begin{align}
&\quad\quad\quad\quad\quad\pi_{k+1}=\arg\max_\pi\mathbb{E}_{a\sim\pi}[Q^{\pi_k}(s,a)]\label{eq:optimization_problem}\\
&\text{s.t.}\int_{a\in\mathcal{A}}\!\!\!\!\!f\left(\frac{\pi(a|s)}{\mu^*_k(a|s)}\right)\!\!\mathbbm{1}\!\!\left(V^{\mu^*_k}(s)-V^{\pi_k}(s)\right)\mu^*_k(a|s)da\leq\epsilon,\label{eq:adative_constraint}\\
& \quad\quad\quad\quad\quad\ \int_{a\in\mathcal{A}}\pi(a|s)da=1, \ \forall s\in\mathcal{D},\label{eq:policy_norm}
\end{align}
where $f(\cdot)$ is a regularization function, and $\mathbbm{1}(\cdot)$ is an indicator function with $x\geq0,\mathbbm{1}(x)=1;x<0,\mathbbm{1}(x)=0$. Constraint~(\ref{eq:adative_constraint}) allows us to adaptively blend the offline optimal policy into online policy learning. By leveraging the Lagrangian multiplier and KKT condition~\cite{peters2010relative,peng2019advantage}, we derive the closed-form solution for the constrained optimization problem, as outlined in the following proposition.
\begin{proposition}\label{prop_closed_form}
For the constrained optimization problem defined by~(\ref{eq:optimization_problem})$\sim$(\ref{eq:policy_norm}), if $V^{\mu^*_k}(s)\geq V^{\pi_k}(s)$, the closed-form solution is
\begin{equation}\label{closed_form}
\pi_{k+1}=\frac{1}{Z(s)}\mu^*_k(a|s)\left(f'\right)^{-1}\Big(Q^{\pi_k}(s,a)\Big),
\end{equation}
where $Z(s)$ is a partition function to normalise the action distribution. Or, when $V^{\mu^*_k}(s)<V^{\pi_k}(s)$, $\pi_{k+1}$ is an ordinary solution to maximize $Q^{\pi_k}(s,a)$.
\end{proposition}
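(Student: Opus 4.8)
The plan is to treat the two cases of the indicator separately and, in the nontrivial case $V^{\mu^*_k}(s)\ge V^{\pi_k}(s)$, set up the Lagrangian of the constrained program~(\ref{eq:optimization_problem})$\sim$(\ref{eq:policy_norm}) and extract the stationarity (KKT) condition pointwise in $a$. First I would observe that when $V^{\mu^*_k}(s)<V^{\pi_k}(s)$, the indicator $\mathbbm{1}(V^{\mu^*_k}(s)-V^{\pi_k}(s))$ vanishes, so constraint~(\ref{eq:adative_constraint}) reads $0\le\epsilon$ and is vacuous; the problem collapses to maximizing $\mathbb{E}_{a\sim\pi}[Q^{\pi_k}(s,a)]$ subject only to the normalization~(\ref{eq:policy_norm}), which is the ``ordinary'' policy-improvement update, giving the second branch immediately.

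For the main case, the indicator equals $1$, so the active constraint is $\int_{\mathcal{A}} f\!\left(\tfrac{\pi(a|s)}{\mu^*_k(a|s)}\right)\mu^*_k(a|s)\,da\le\epsilon$. I would form the Lagrangian
\[
\mathcal{L}=\int_{\mathcal{A}}\pi(a|s)Q^{\pi_k}(s,a)\,da-\lambda\!\left(\int_{\mathcal{A}} f\!\left(\tfrac{\pi(a|s)}{\mu^*_k(a|s)}\right)\mu^*_k(a|s)\,da-\epsilon\right)+\beta\!\left(1-\int_{\mathcal{A}}\pi(a|s)\,da\right),
\]
with multipliers $\lambda\ge0$ for the inequality and $\beta\in\mathbb{R}$ for normalization. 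Differentiating the integrand with respect to $\pi(a|s)$ and setting it to zero yields, for each $a$,
\[
Q^{\pi_k}(s,a)-\lambda f'\!\left(\tfrac{\pi(a|s)}{\mu^*_k(a|s)}\right)-\beta=0,
\]
where the $\mu^*_k(a|s)$ factors cancel via the chain rule. Solving for the ratio gives $\tfrac{\pi(a|s)}{\mu^*_k(a|s)}=(f')^{-1}\!\big((Q^{\pi_k}(s,a)-\beta)/\lambda\big)$, i.e. $\pi_{k+1}(a|s)=\mu^*_k(a|s)\,(f')^{-1}\!\big((Q^{\pi_k}(s,a)-\beta)/\lambda\big)$. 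Absorbing the state-dependent constants $\beta$ and $\lambda$ into the normalizer — concretely, folding them into $Z(s)$ and into the (reparametrized) regularizer so that $(f')^{-1}$ is applied directly to $Q^{\pi_k}(s,a)$ — recovers the stated form $\pi_{k+1}=\tfrac{1}{Z(s)}\mu^*_k(a|s)\,(f')^{-1}\!\big(Q^{\pi_k}(s,a)\big)$, with $Z(s)=\int_{\mathcal{A}}\mu^*_k(a|s)(f')^{-1}(Q^{\pi_k}(s,a))\,da$ enforcing $\int\pi_{k+1}(a|s)\,da=1$. I would also note that convexity of $f$ makes $f'$ monotone, hence $(f')^{-1}$ well-defined, and the program concave in $\pi$, so the KKT point is the global maximizer; this is the standard variational argument of \cite{peters2010relative,peng2019advantage}.

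The main obstacle is bookkeeping rather than depth: justifying the pointwise differentiation of the functional (a routine calculus-of-variations / interchange-of-derivative-and-integral step, valid under mild regularity on $f$ and on $Q^{\pi_k}$), and being careful about the role of $\lambda$ — one should argue the constraint is active at the optimum (so $\lambda>0$ and division by $\lambda$ is legitimate), since otherwise $\epsilon$ plays no role and we are back to the unconstrained maximizer. I would handle this by the usual argument that for the closed-form to be a genuine blend one takes $\epsilon$ small enough that the unconstrained maximizer is infeasible, forcing complementary slackness to bind; then the displayed solution follows, and the constants are reabsorbed into $Z(s)$ as above. Support restrictions ($a\sim\mu^*_k\Rightarrow a\in\mathcal{D}$) are inherited automatically because $\pi_{k+1}$ carries the factor $\mu^*_k(a|s)$, which I would remark on to close the argument.
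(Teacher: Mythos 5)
Your proposal follows essentially the same route as the paper's own proof: form the Lagrangian with multipliers for the $f$-divergence constraint and the normalization, differentiate pointwise in $\pi(a|s)$ so the $\mu^*_k$ factors cancel, apply the KKT stationarity condition, and absorb the multipliers into the partition function $Z(s)$, with the vacuous-indicator case handled separately as the ordinary maximizer. If anything, you are slightly more careful than the paper in flagging that the constants $\beta$ and $\lambda$ only disappear into $Z(s)$ after reparametrizing $f$ (or for specific choices such as $f(x)=x\log x$), and in noting when the inequality constraint is active.
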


The proof is provided in Appendix~\ref{sec:Theoretical_Analyses}. Then, based on this closed-form solution, we show that the newly generated learning policy $\pi_{k+1}$ would have a higher value than the old learning policy $\pi_k$ \emph{w.r.t.} the state-action distribution of the replay buffer in the following proposition.
\begin{proposition}
Let $\pi_k$ be the older learning policy and the newer one $\pi_{k+1}$ be the solution of~(\ref{eq:optimization_problem})$\sim$(\ref{eq:policy_norm}). Then we achieve $Q^{\pi_{k+1}}(s,a)\geq Q^{\pi_k}(s,a)$ for all $(s,a)\in\mathcal{D}$, with the offline optimal policy $\mu^*_k$ serving as a performance baseline policy.
\end{proposition}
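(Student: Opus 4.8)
The plan is to reduce the claim to the one-step improvement inequality
$\mathbb{E}_{a\sim\pi_{k+1}}\!\big[Q^{\pi_k}(s,a)\big]\ge V^{\pi_k}(s)$ for every $s\in\mathcal{D}$, and then bootstrap it through the Bellman expectation operator in the classical policy-improvement manner. Indeed, granting the one-step inequality, for any $(s,a)\in\mathcal{D}$ we have $\mathcal{T}^{\pi_{k+1}}Q^{\pi_k}(s,a)=r(s,a)+\gamma\,\mathbb{E}_{s'}\big[\mathbb{E}_{a'\sim\pi_{k+1}}Q^{\pi_k}(s',a')\big]\ge r(s,a)+\gamma\,\mathbb{E}_{s'}\big[V^{\pi_k}(s')\big]=Q^{\pi_k}(s,a)$, using that the successor states recorded with $(s,a)$ in the buffer are again in $\mathcal{D}$. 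Since $\mathcal{T}^{\pi_{k+1}}$ is monotone and a $\gamma$-contraction with fixed point $Q^{\pi_{k+1}}$ (both properties already used for equations (\ref{evalua_for_Q}) and (\ref{V_evalua})), iterating yields $Q^{\pi_{k+1}}=\lim_{n\to\infty}(\mathcal{T}^{\pi_{k+1}})^{n}Q^{\pi_k}\ge Q^{\pi_k}$ on $\mathcal{D}$. Everything therefore comes down to the one-step inequality, which I would prove using the same case split that defines $\pi_{k+1}$ in Proposition~\ref{prop_closed_form}.

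In the first case, $V^{\pi_k}(s)\ge V^{\mu^*_k}(s)$, Proposition~\ref{prop_closed_form} tells us $\pi_{k+1}(\cdot\mid s)$ is an ordinary maximizer of $\pi\mapsto\mathbb{E}_{a\sim\pi}[Q^{\pi_k}(s,a)]$ over densities on $\mathcal{A}$; as $\pi_k(\cdot\mid s)$ is itself a feasible competitor, $\mathbb{E}_{a\sim\pi_{k+1}}[Q^{\pi_k}(s,a)]\ge\mathbb{E}_{a\sim\pi_k}[Q^{\pi_k}(s,a)]=V^{\pi_k}(s)$, so the one-step inequality holds at such $s$.

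In the second case, $V^{\mu^*_k}(s)\ge V^{\pi_k}(s)$, I would start from the closed form (\ref{closed_form}), $\pi_{k+1}(a\mid s)=\tfrac1{Z(s)}\mu^*_k(a\mid s)\,(f')^{-1}\!\big(Q^{\pi_k}(s,a)\big)$. With $f$ strictly convex (so $(f')^{-1}$ is well defined and nondecreasing) and $Z(s)<\infty$, this is a reweighting of $\mu^*_k$ favoring actions with larger $Q^{\pi_k}(s,\cdot)$; since the reweighting factor $(f')^{-1}(Q^{\pi_k}(s,\cdot))$ is a nondecreasing function of the integrand $Q^{\pi_k}(s,\cdot)$, a Chebyshev/FKG-type correlation inequality under $\mu^*_k$ gives $\mathbb{E}_{a\sim\pi_{k+1}}[Q^{\pi_k}(s,a)]\ge\mathbb{E}_{a\sim\mu^*_k}[Q^{\pi_k}(s,a)]$. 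It then remains to show $\mathbb{E}_{a\sim\mu^*_k}[Q^{\pi_k}(s,a)]\ge V^{\pi_k}(s)$, which I would try to extract from the defining in-support optimality of $\mu^*_k$ in (\ref{offline_policy_def}), the support property $a\sim\mu^*_k\Rightarrow a\in\mathcal{D}$, and the case hypothesis $V^{\mu^*_k}(s)\ge V^{\pi_k}(s)$.

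The hard part is exactly this last step. The case hypothesis is stated through $Q^{\mu^*_k}$ — it reads $\mathbb{E}_{a\sim\mu^*_k}[Q^{\mu^*_k}(s,a)]\ge\mathbb{E}_{a\sim\pi_k}[Q^{\pi_k}(s,a)]$ — while after the correlation inequality I still need the analogous bound with $Q^{\pi_k}$ substituted for $Q^{\mu^*_k}$, and the two action-value functions are not pointwise ordered in general. I expect this to require leaning on the optimality of $\mu^*_k$ among all policies supported on the buffer (so that the in-support behavior of $\pi_k$ cannot beat $\mu^*_k$), perhaps encoded as an explicit coverage assumption on $\mathcal{D}$, or on evaluating the improvement against a buffer-restricted version of $\pi_k$. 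Alongside this I would also verify the routine technical points: that $\mathcal{D}$ is (approximately) closed under the stored transitions so the bootstrap stays inside $\mathcal{D}$; that $Z(s)$ is finite and $(f')^{-1}$ monotone for the regularizer $f$ actually used; and that the boundary case $V^{\mu^*_k}(s)=V^{\pi_k}(s)$ is assigned to a branch consistently with Proposition~\ref{prop_closed_form}.
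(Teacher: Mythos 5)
Your overall architecture is exactly the paper's: reduce everything to the one-step inequality $\mathbb{E}_{a\sim\pi_{k+1}}[Q^{\pi_k}(s,a)]\geq \mathbb{E}_{a\sim\pi_k}[Q^{\pi_k}(s,a)]$ and then bootstrap it through repeated application of $\mathcal{T}^{\pi_{k+1}}$, splitting into the unconstrained and constrained branches of Proposition~\ref{prop_closed_form}. Your unconstrained case is the paper's verbatim. In the constrained case, your Chebyshev/FKG step --- $\pi_{k+1}$ is $\mu^*_k$ reweighted by the nondecreasing factor $(f')^{-1}(Q^{\pi_k}(s,\cdot))$, hence $\mathbb{E}_{a\sim\pi_{k+1}}[Q^{\pi_k}]\geq\mathbb{E}_{a\sim\mu^*_k}[Q^{\pi_k}]$ --- is sound (given $f$ strictly convex) and is in fact cleaner than the corresponding manipulation in the paper.

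The gap you flag is real, and you are right that it is the crux: you still need $\mathbb{E}_{a\sim\mu^*_k}[Q^{\pi_k}(s,a)]\geq V^{\pi_k}(s)$, while the case hypothesis $V^{\mu^*_k}(s)\geq V^{\pi_k}(s)$ compares expectations of two \emph{different} action-value functions that are not pointwise ordered. You should know, however, that the paper does not close this gap rigorously either. Its chain of inequalities (\ref{eq_15}) substitutes the symbol $\arg\max_{a\sim\mathcal{D}}Q^{\mu^*_k}(s,a)$ for the density $\mu^*_k(a|s)$ inside the integral, asserts that replacing $Q^{\mu^*_k}$ by $Q^{\pi_k}$ inside that $\arg\max$ can only decrease the integral, then asserts that the resulting $Q^{\pi_k}$-greedy in-buffer object dominates $\pi_k(a|s)$ (annotated ``Satisfied in $\mathcal{D}$'', even though the paper elsewhere stresses that $\pi_k$ is \emph{not} restricted to the support of $\mathcal{D}$), and finally drops the factor $\frac{1}{Z(s)}(f')^{-1}(Q^{\pi_k})$ as if it were bounded below by one. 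None of these steps is justified as written; collectively they amount to assuming the very inequality you could not derive. So your proposal reproduces the paper's argument up to precisely the point where the paper's own proof stops being rigorous, and your diagnosis --- that one needs an additional coverage or support assumption relating $\pi_k$, $\mu^*_k$ and $\mathcal{D}$ in order to transfer the value comparison from $Q^{\mu^*_k}$ to $Q^{\pi_k}$ --- is an accurate account of what a complete proof would require.
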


With the convergence of policy evaluations on $\pi_k$ and $\mu^*_k$, as well as the results of policy improvement, we can alternate both steps and the online learning policy would provably converge to the optimal policy.

\begin{proposition}
Assume $\vert \mathcal{A}\vert < \infty$, repeating the alternation of the policy evaluation~(\ref{evalua_for_Q})$\sim$(\ref{V_evalua}) and policy improvement~(\ref{eq:optimization_problem})$\sim$(\ref{eq:policy_norm}) can make any online learning policy $\pi_k\in\Pi$ converge to the optimal policies $\pi^*$, s.t. $Q^{\pi^*}(s_t,a_t)\geq Q^{\pi_k}(s_t,a_t), \forall (s_t,a_t) \in \mathcal{S}\times \mathcal{A}$.
\end{proposition}

\subsection{Offline-Boosted Actor Critic}
Inspired by the previous analyses, we introduce our method for online off-policy RL, Offline-Boosted Actor-Critic (OBAC), summarized in Algorithm~\ref{alg:OBAC}. To extend OBAC to large continuous control domains, we derive a practical implementation with high-quality function approximators following previous works~\cite{haarnoja2018soft,lee2020stochastic}. Besides, we highlight that OBAC introduces implicit regularization of the offline optimal policy for online policy training, without explicitly learning it, thereby mitigating computational complexity at each iteration.

Specifically, we consider parameterized state value functions $V^\chi_{\psi}(s)$ and state-action value functions $Q^\chi_\phi(s,a)$ with parameters $\psi$ and $\phi$, where $\chi$ represents both the online learning policy $\pi$ and the offline optimal policy $\mu^*$, alongside a tractable online learning policy $\pi_\theta(a|s)$ defined as a Gaussian policy with parameters $\theta$. We utilize the updated online learning policy to interact with the environment, collecting new samples $\{(s,a,s',r)\}$ into the buffer $\mathcal{D}$.

\noindent\textbf{Policy Evaluation.}\quad
In this step, we first derive $V^\pi(s)$ and $Q^\pi_\phi(s,a)$ through the Bellman Expectation Operator~(\ref{evalua_for_Q}), by minimizing the squared residual error
\begin{align}\label{evalua_Q_pi_k}
\arg\min_{Q^\pi_\phi}\!\mathbb{E}_{(s,a,r,s')\sim\mathcal{D}}\!\left[\frac{1}{2}\left(Q^\pi_\phi(s,a)\!-\!\mathcal{T}^\pi Q^\pi_\phi(s,a)\right)^2\right]
\end{align}
and then we compute the value function $V^\pi(s)$ forward without gradient step, 
\begin{equation}\label{evalua_V_pi_k}
V^\pi(s)=\mathbb{E}_{a\sim\pi}[Q^\pi_\phi(s,a)].
\end{equation}

Recalling the definition~(\ref{offline_policy_def}) of the offline optimal policy $\mu^*_k=\arg\max_{a\sim\mathcal{D}}Q^{\mu_k}(s,a)$. To eliminate the requirement of $\mu^*_k$ in the evaluation step, we transfer the Bellman Expectation Operator $\mathcal{T}^{\mu^*_k}$ as
\begin{align}\label{max_mu_evaluation}
&\mathcal{T}^{\mu^*_k}Q^{\mu^*_k}_\phi(s,a)=r(s,a)+\gamma\mathbb{E}_{s',a'\sim\mu^*_k}\left[Q^{\mu^*_k}_\phi(s',a')\right]\nonumber\\
&\quad\quad=r(s,a)+\gamma\mathbb{E}_{s'}\left[\max_{a'\sim\mathcal{D}}Q^{\mu^*_k}_\phi(s',a')\right].
\end{align}

Within Equation~(\ref{max_mu_evaluation}), prior works on offline RL have effectively addressed $\max_{a'\sim\mathcal{D}}$ without explicitly requiring $\mu^*_k$, such as expectile regression used in IQL~\cite{kostrikov2021offline}. For simplicity, we use expectile regression to achieve $Q^{\mu^*}(s,a)$ and $V^{\mu^*_k}(s)$, with two specific steps:
\begin{equation}\label{evalua_V_mu}
\arg\min_{V^{\mu^*}_\psi}\mathbb{E}_{(s,a)\sim\mathcal{D}}\left[L^\tau_2\left(Q^{\mu^*}_\phi(s,a)-V^{\mu^*}_\psi(s)\right)\right]
\end{equation}
where $L^\tau_2(x)=\vert\tau-\mathbbm{1}(x<0)\vert x^2$ is the expectile regression function, and $\tau$ is an expectile factor. And,
\begin{equation}\label{evalua_Q_mu}
\arg\min_{Q^{\mu^*}_\phi}\mathbb{E}_{(s,a,s',r)\sim\mathcal{D}}\left[\frac{1}{2}\left(r+\gamma V^{\mu^*}_\psi(s')-Q^{\mu^*}_\phi(s,a)\right)^2\right].
\end{equation}

Based on Theorem 3 in ~\citet{kostrikov2021offline}, when $\tau\rightarrow 1$, the term $\max_{a'\sim\mathcal{D}}Q^{\mu^*_k}_\phi(s',a')$ can be approached
to derive $Q^{\mu^*_k}_\phi(s,a)$. Thus, we complete the policy evaluation for both $\pi_k$ and $\mu^*_k$, where the former is based on~(\ref{evalua_Q_pi_k}) and~(\ref{evalua_V_pi_k}), and the latter by~(\ref{evalua_V_mu}) and~(\ref{evalua_Q_mu}). In our implementation, we employ the Clipped Double Q-technique~\cite{fujimoto2018addressing} for stability and mitigating overestimation.

\begin{algorithm}[t]
% \vspace{-5pt}
   \caption{Offline-Boosted Actor-Critic (OBAC)}
   \label{alg:OBAC}
\begin{algorithmic}[1]
   \STATE {\bfseries Input:} Critic $Q^\pi_\phi$, critic $Q^{\mu^*}_\phi$, value $V^{\mu^*}_\psi$, actor $\pi_\theta$, replay buffer $\mathcal{D}$.
   \REPEAT
       \FOR{each environment step}
           \STATE $a\sim\pi_\phi(a|s)$ and $r,s'\sim\mathcal{P}(s'|s,a)$
           \STATE $\mathcal{D}\leftarrow\mathcal{D}\cup\left\{(s,a,s',r)\right\}$
       \ENDFOR
       \FOR{each gradient step}
           \STATE \textit{For} $\pi$: Update $Q^\pi_\phi$ by~(\ref{evalua_Q_pi_k}), compute $V^\pi$ by~(\ref{evalua_V_pi_k})
           \STATE \textit{For} $\mu^*$: Update $Q^{\mu^*}_\phi$ by~(\ref{evalua_Q_mu}), update $V^{\mu^*}_\psi$ by~(\ref{evalua_V_mu})
           \STATE Update $\pi_\theta$ by~(\ref{eq:update_pi})
       \ENDFOR
   \UNTIL{the policy performs well in the environment}
\end{algorithmic}
\end{algorithm}

\noindent\textbf{Policy Improvement.}\quad
Directly using the closed-form solution~(\ref{closed_form}) for policy updates is intractable with the unknown $Z(s)$ and $\mu^*_k$. In our implementation, we opt to restrict the solution within a tractable set of Gaussian policies and project the improved policy into these desired policies by Kullback-Leibler divergence $\KL(\cdot)$. Then, if we choose the regularization function $f(x)=x\log x$, the objective of the updated policy is
\begin{equation}\label{eq6}
\arg\min_{\pi}\left\{
\begin{aligned}
&\KL\left(\pi\Big\Vert\frac{\exp(Q^{\pi_k})}{Z}\right), V^{\pi_k}\geq V^{\mu^*_k}, \\
&\KL\left(\pi\Big\Vert\frac{\mu^*_k\exp( Q^{\pi_k})}{Z}\right), V^{\pi_k}\leq V^{\mu^*_k}.
\end{aligned}
\right.
\end{equation}

\begin{figure*}[t]
    \centering
    \includegraphics[width=0.24\linewidth]{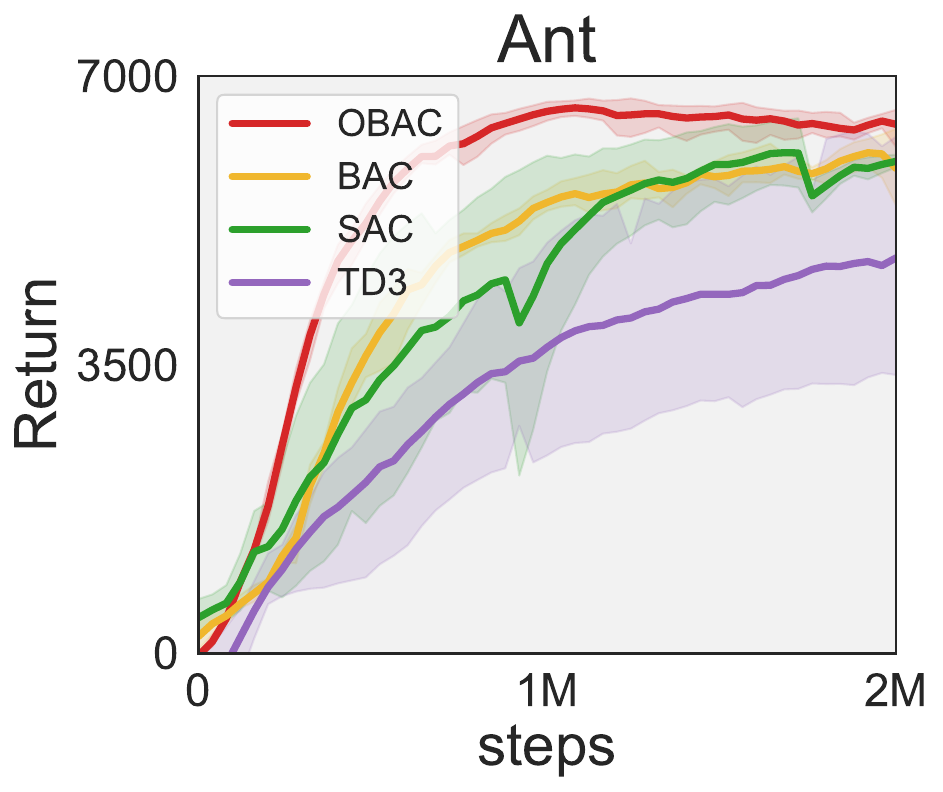}
    \includegraphics[width=0.25\linewidth]{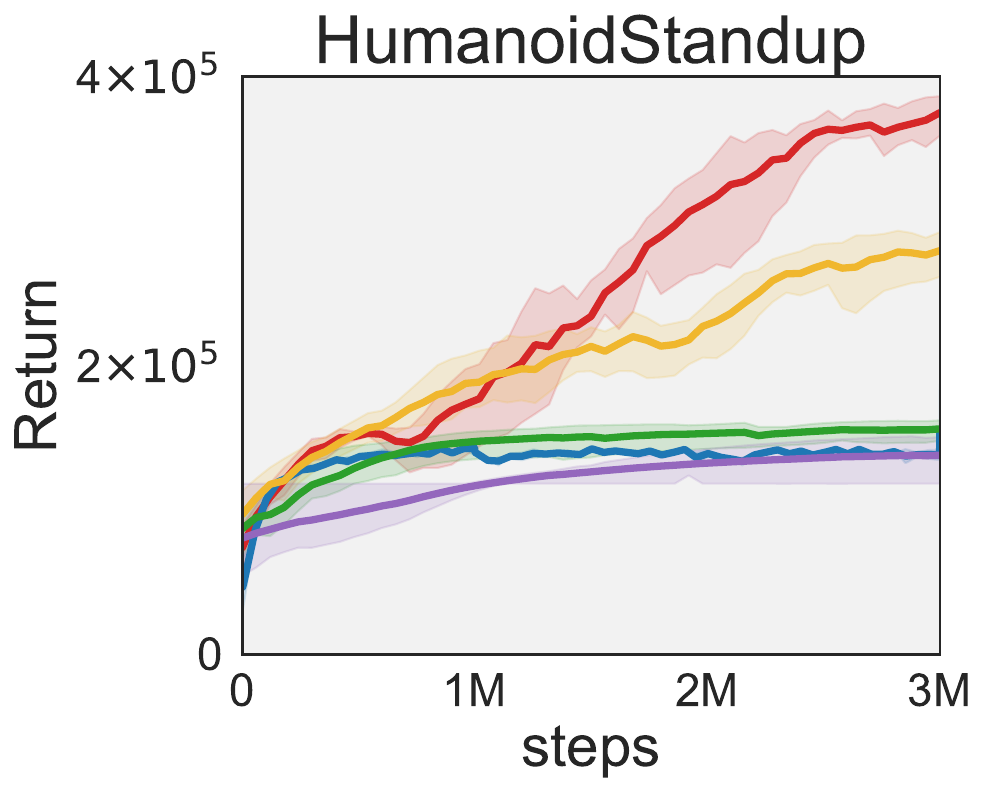}
    \includegraphics[width=0.24\linewidth]{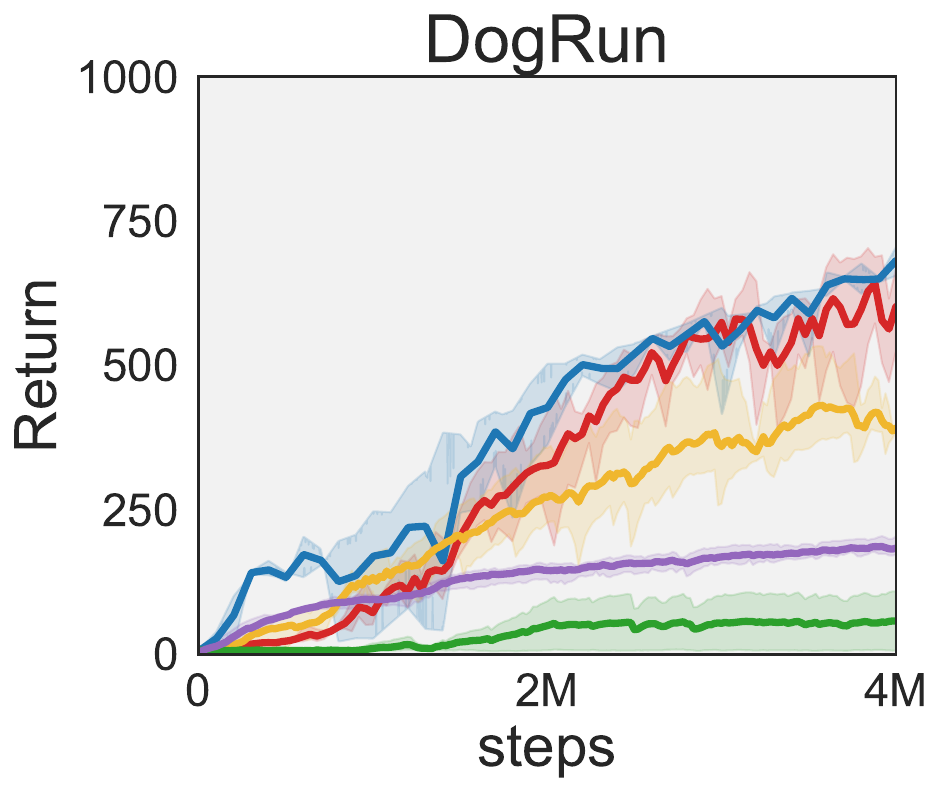}
    \includegraphics[width=0.24\linewidth]{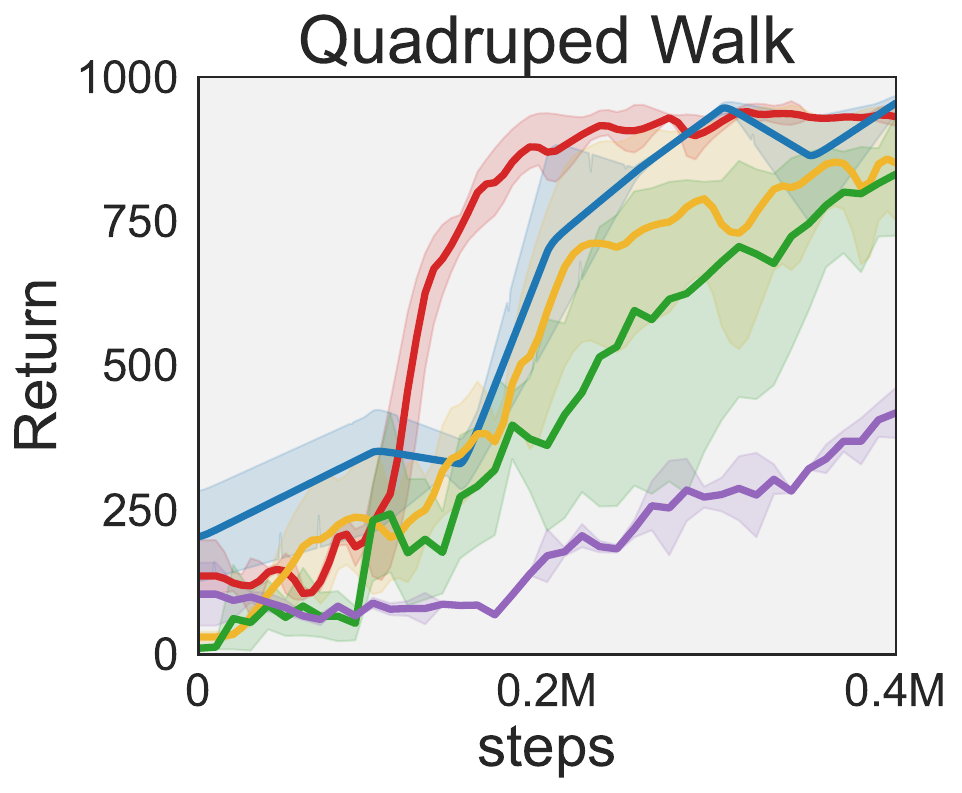}
    \includegraphics[width=0.24\linewidth]{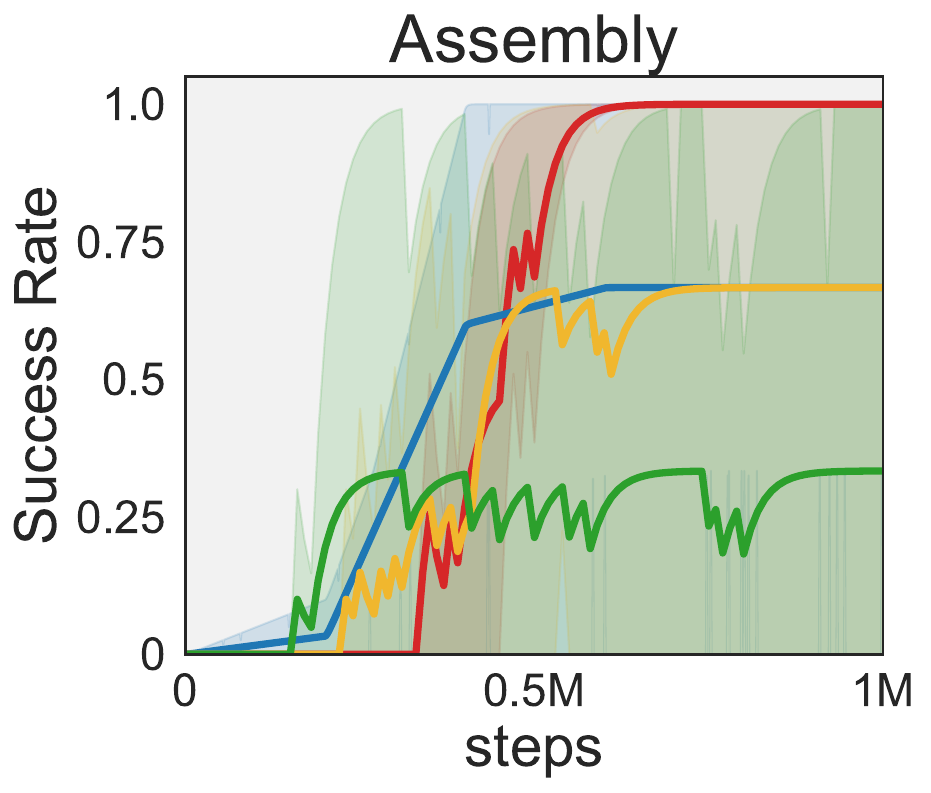}
    \includegraphics[width=0.24\linewidth]{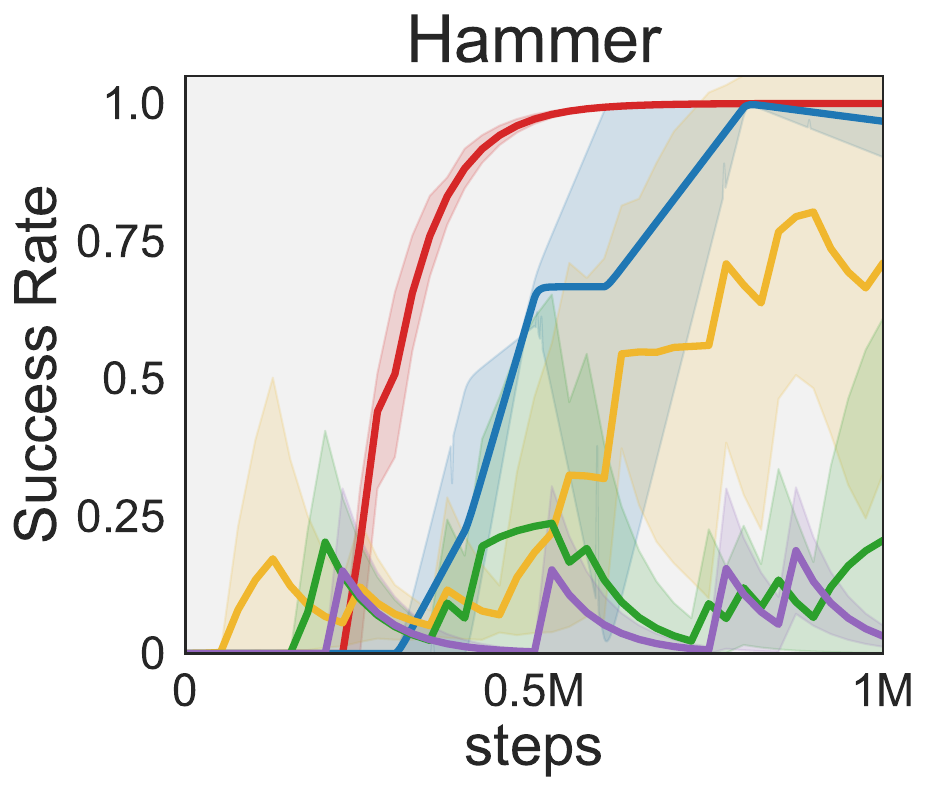}
    \includegraphics[width=0.24\linewidth]{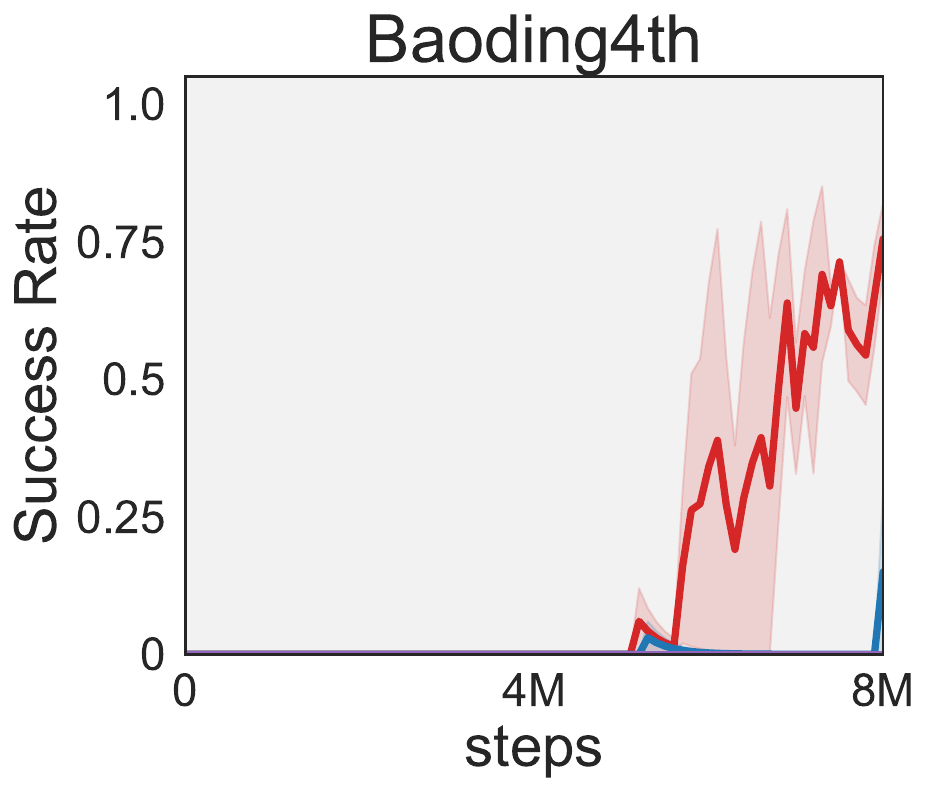}
    \includegraphics[width=0.24\linewidth]{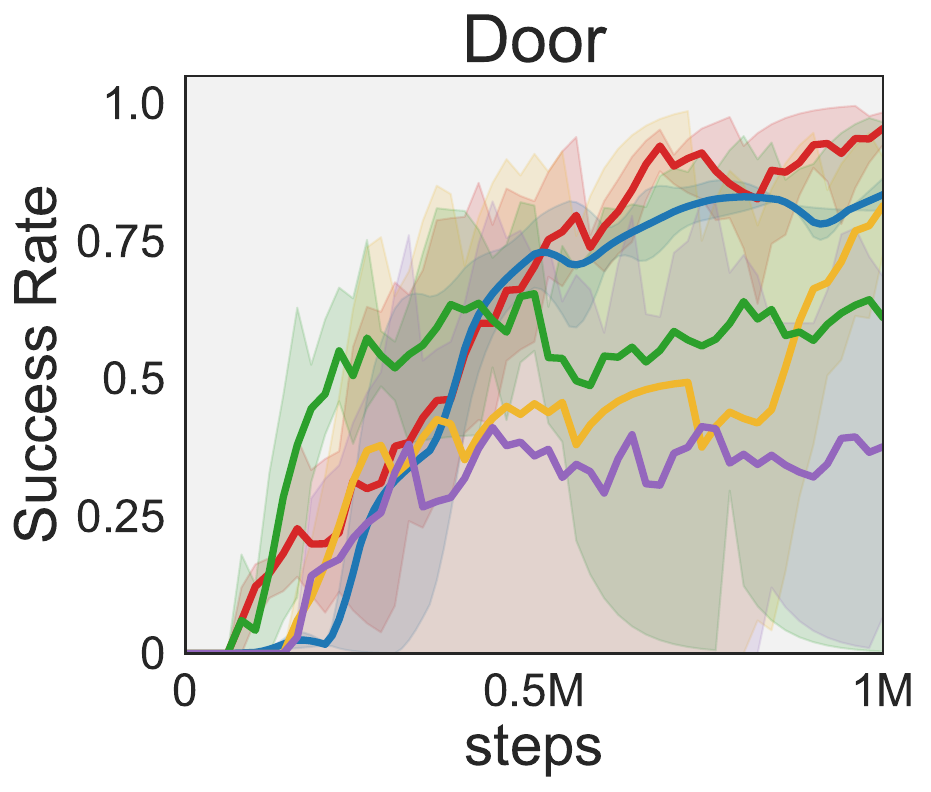}
    \includegraphics[width=0.24\linewidth]{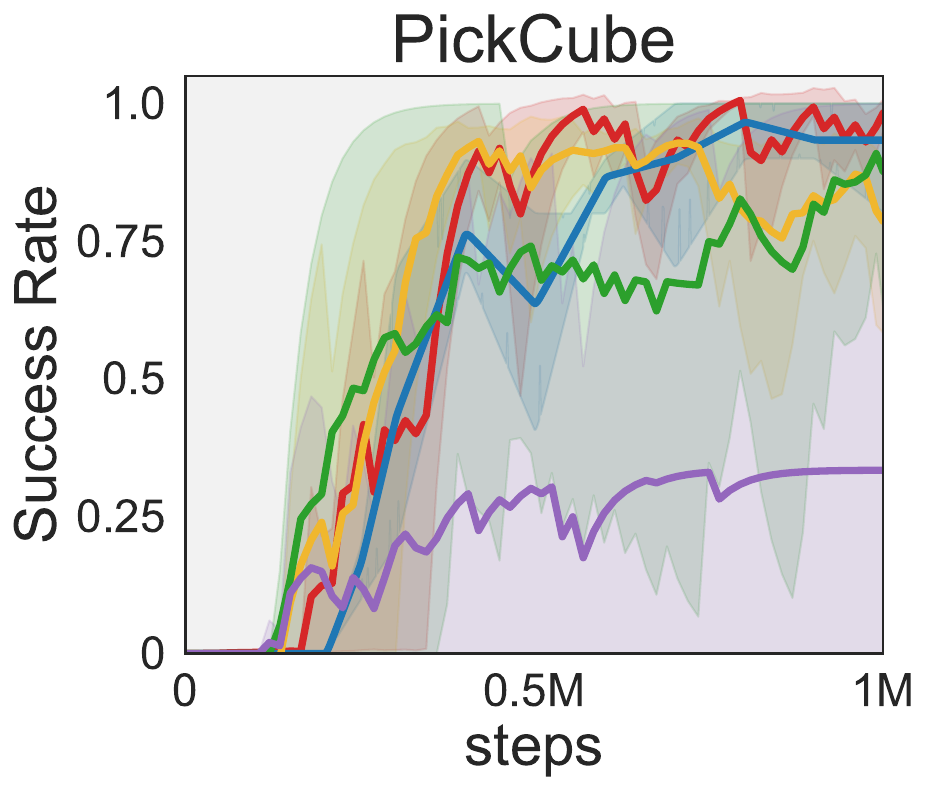}
    \includegraphics[width=0.25\linewidth]{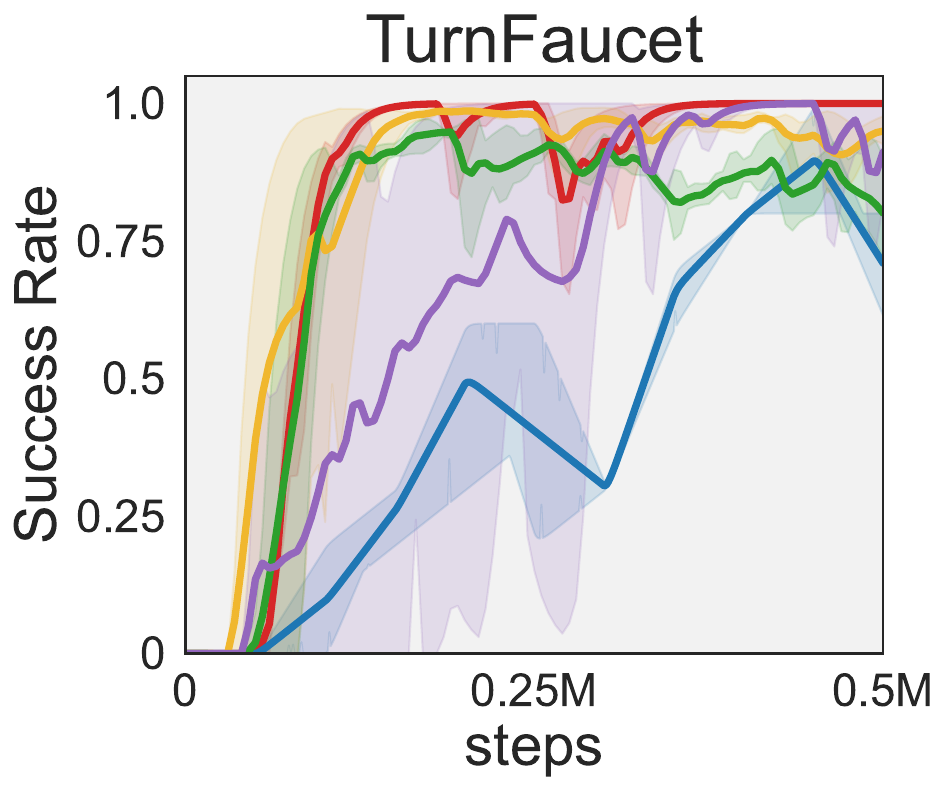}
    \includegraphics[width=0.24\linewidth]{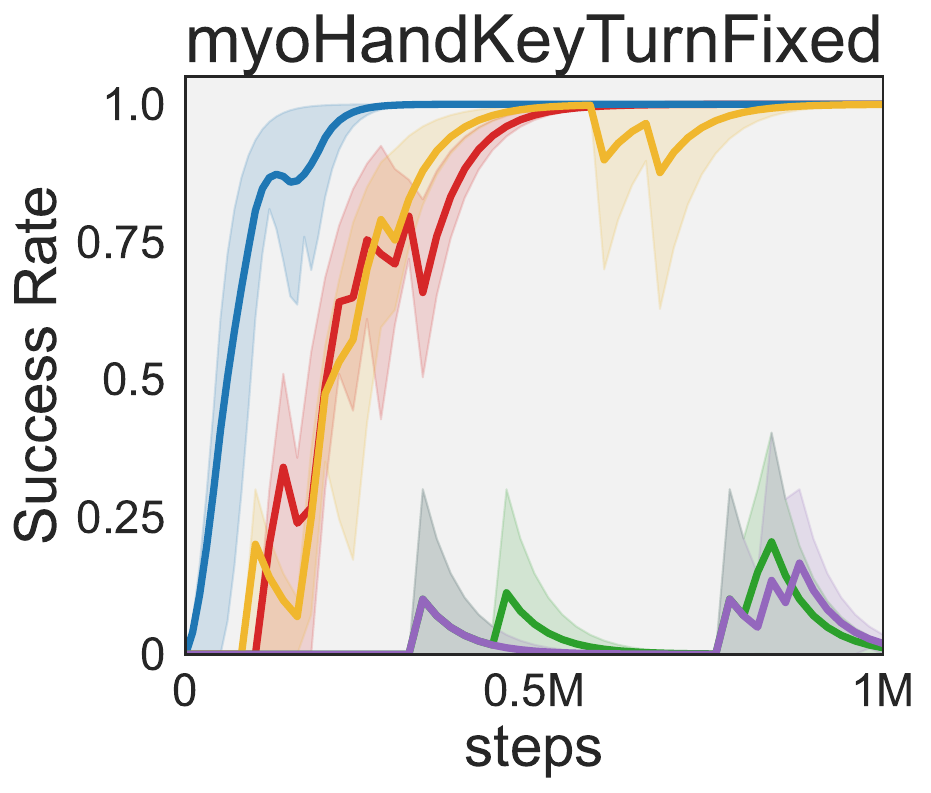}
    \includegraphics[width=0.24\linewidth]{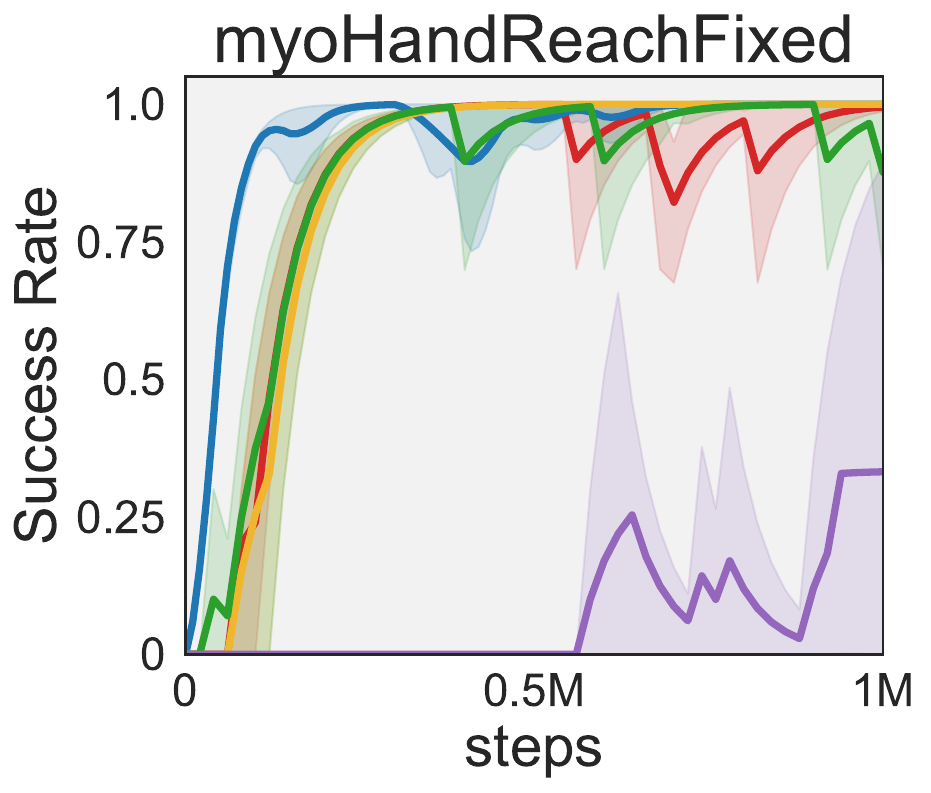}
    \includegraphics[width=0.5\linewidth]{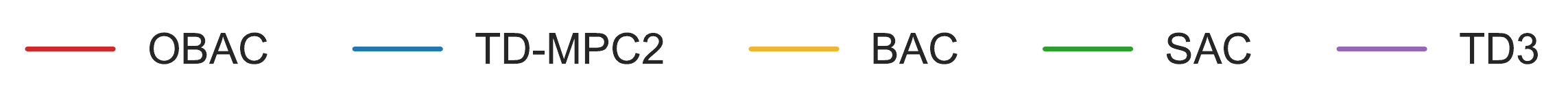}
    \caption{\textbf{Main results.} We provide performance comparisons for $12$ of the $53$ tasks, two for each task suite. Please refer to Appendix~\ref{sec:all_results} for the comprehensive results. The solid lines are the average return/success rate, while the shades indicate $95\%$ confidence intervals. All algorithms are evaluated with $5$ random seeds.}
    \label{fig:Main_results}
\end{figure*}
Given the data $\{(s,a,s',r)\}$ randomly sampled from $\mathcal{D}$, if $V^{\mu^*_k}(s)<V^{\pi_k}(s)$, we disable the constraint and update the policy as prior off-policy RL methods. In contrast, if $V^{\mu^*_k}(s)\geq V^{\pi_k}(s)$, we consider the distribution of sampled data can achieve better performance than the learning policy, thus it can approximate the offline optimal policy~\cite{zhang2022replay}. Thus, with the Gaussian policy set, we summarise both cases for policy updating:
\begin{align}\label{eq:update_pi}
&\arg\min_{\pi_\theta\in\Pi}\mathbb{E}_{s\sim\mathcal{D}}\Big\{\mathbb{E}_{a\sim\pi_\theta}[\log\pi_\theta(a|s)-Q^{\pi_k}(s,a)]\nonumber\\
&\quad-\lambda\mathbb{E}_{a\sim\mathcal{D}}[\log\pi_\theta(a|s)]\mathbbm{1}\left(V^{\mu^*_k}(s)-V^{\pi_k}(s)\right)\Big\}.
\end{align}
where $\lambda$ is a behavior clone weighted factor, similar to previous offline RL works~\cite{kostrikov2021offline}.

\section{Experiment}
We evaluate OBAC across \textbf{53} continuous control tasks spanning \textbf{6} domains: Mujoco~\cite{todorov2012mujoco}, DMControl~\cite{tassa2018deepmind}, Meta-World~\cite{yu2020meta}, Adroit~\cite{kumar2015mujoco}, Myosuite~\cite{caggiano2022myosuite}, and Maniskill2~\cite{gu2022maniskill2}. These tasks cover a wide range of challenges, including high-dimensional states and actions (up to $\mathcal{S}\in\mathbb{R}^{375}$ and $\mathcal{A}\in\mathbb{R}^{39}$), sparse rewards, multi-object and delicate manipulation, musculoskeletal control, and complex locomotion. Please refer to Appendix~\ref{sec:implementation} for the implementation details and environment settings in our experiments.

With these experimental evaluations, we seek to investigate the following questions: 1) Does the introduction of the concurrent offline optimal policy significantly improve performance? 2) How does OBAC compare to the popular model-free and model-based RL methods for sample efficiency and eventual performance? 3) How does the adaptive mechanism work in OBAC?

\noindent\textbf{Baselines.}\quad 
Our baselines contain: 1) \textbf{SAC}~\cite{haarnoja2018soft} and \textbf{TD3}~\cite{fujimoto2018addressing}, two data-efficient off-policy model-free RL methods, where the former utilizes stochastic policy while the latter uses deterministic policy; 2) \textbf{BAC}~\cite{ji2023seizing}, an off-policy model-free RL method to employ the state-action value of offline optimal policy to mitigate the underestimation of the $Q$-value of online learning policy; 3) \textbf{TD-MPC2}~\cite{hansen2023td}, a high-efficient model-based RL method that combines model predictive control and TD-learning.

\subsection{Experimental Results}
\noindent\textbf{Performance comparison.}\quad
The learning curves presented in Figure~\ref{fig:Main_results} demonstrate the performance of OBAC alongside various baselines across diverse task suites. Overall, OBAC, as a model-free RL method, obviously outperforms other model-free RL baselines and exhibits comparable capabilities to model-based RL methods in terms of exploration efficiency and asymptotic performance. Notably, with identical hyperparameters, OBAC excels in high-dimensional locomotion tasks (DogRun), multi-body contact manipulation (Baoding4th), and intricate muscle control tasks (myoHand). A noteworthy achievement is the successful application of OBAC to solve the Baoding-4th task, a challenging scenario involving a shadow hand managing the rotation cycle of two Baoding balls, without any prior demonstrations. Additionally, we observe that TD-MPC2, even with different prediction horizons (ranging from 3 to 6), does not perform well on the Mujoco suite, possibly influenced by variations of the \textit{done} signal settings of the environments (please refer to Appendix~\ref{sec:all_results} for more discussion). We provide a comprehensive presentation of experimental
results in Appendix~\ref{sec:all_results}, with accompanying visualizations provided in \textcolor{blue}{\url{https://roythuly.github.io/OBAC/}}.

\begin{figure}[t] 
    \centering
    \includegraphics[width=0.9\linewidth]{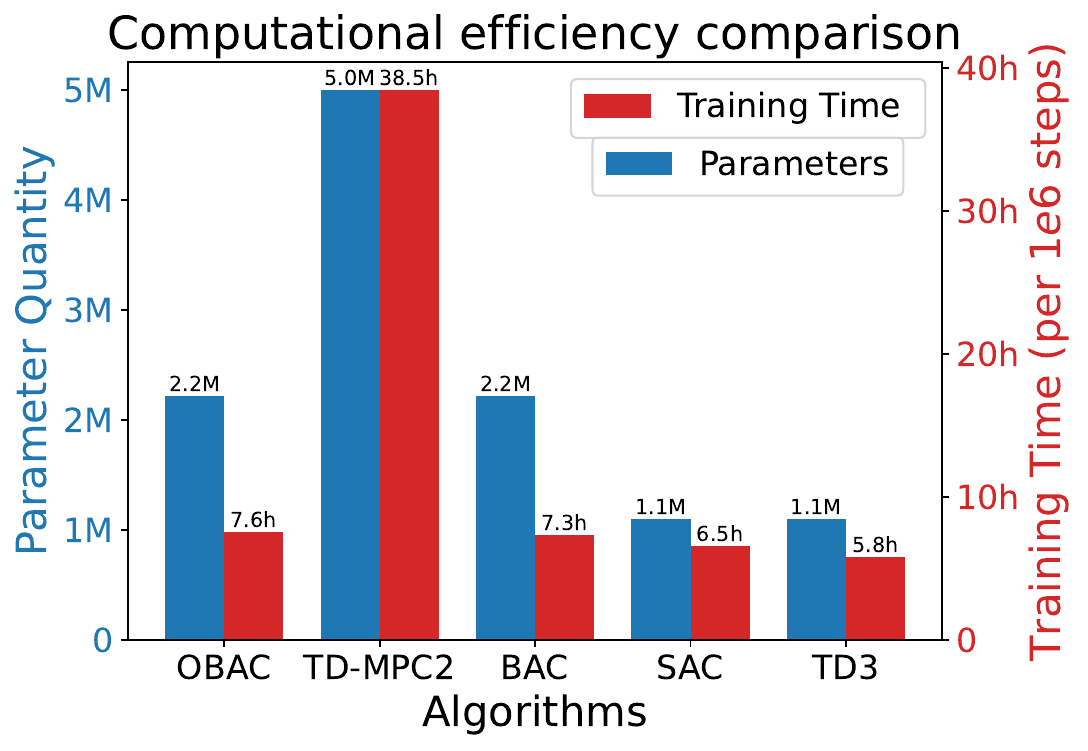}
    \caption{\textbf{Parameters and Wall time.} \textit{(left)}: parameter quantity of each algorithm, and \textit{(right)}: training wall time per 1 million steps of environmental interaction. With these comparisons, OBAC can achieve similar simplicity with other model-free RL methods, and require only $\mathbf{50\%}$ parameters and $\mathbf{20\%}$ training time compared to model-based RL methods.}
    \vspace{-10pt}
    \label{fig:Computational_performance}
\end{figure}
\noindent\textbf{Computational efficiency.}\quad
To showcase OBAC's efficiency, we provide a detailed comparison with our baselines in Figure~\ref{fig:Computational_performance}, considering parameter quantity and training wall time on a single RTX3090 GPU. In contrast to model-based RL methods TD-MPC2, OBAC achieves a $\mathbf{50\%}$ reduction in parameter quantity, featuring a simpler architecture with just two critics, one value and one actor, each component employing a 3-layer MLP with a hidden-layer size of 512. In comparison, TD-MPC2 requires additional components such as a dynamics model, reward model, and 5 ensemble critics. Furthermore, OBAC exhibits notable improvements in training efficiency, requiring only $\mathbf{20\%}$ of the training time per 1 million steps of environmental interaction while maintaining comparable sample efficiency and convergent performance. When compared with other model-free RL methods, OBAC demonstrates a similar level of simplicity and cost-effectiveness, highlighting the effectiveness of our algorithmic implementation.

\subsection{Ablation Studies}
We conduct several investigations to ablate the effectiveness of OBAC's design in this section. 

\noindent\textbf{Necessity of adaptive constraints.}\quad
One of the key designs of OBAC is the adaptive policy constraint, which dynamically adjusts based on value comparisons. To evaluate its necessity, we conducted experiments comparing \textit{OBAC (Adaptive)} with a fixed constraint \textit{OBAC (Fixed)}, where policy updates are consistently constrained by the empirical distribution of the replay buffer, as well as \textit{OBAC (Without)} constraint. The top of Figure~\ref{fig:walker2d-fix-ablation} illustrates that the fixed constraint leads to performance degradation due to excessive conservatism, underscoring the effectiveness of our adaptive mechanism for performance improvement. Additionally, the comparison of $Q$ values between the offline optimal policy and the learning policy is visualized at the bottom of Figure~\ref{fig:walker2d-fix-ablation}. The blue region indicates $Q^{\mu^*_k}\leq Q^{\pi_k}$ while the red region signifies $Q^{\mu^*_k}\geq Q^{\pi_k}$. The combined results indicate that when $Q^{\mu^*_k}\geq Q^{\pi_k}$ occurs, activating the policy constraint significantly improves OBAC's performance.

\begin{figure}[t]
    \centering
    \vspace{-6pt}
    \includegraphics[width=0.49\linewidth]{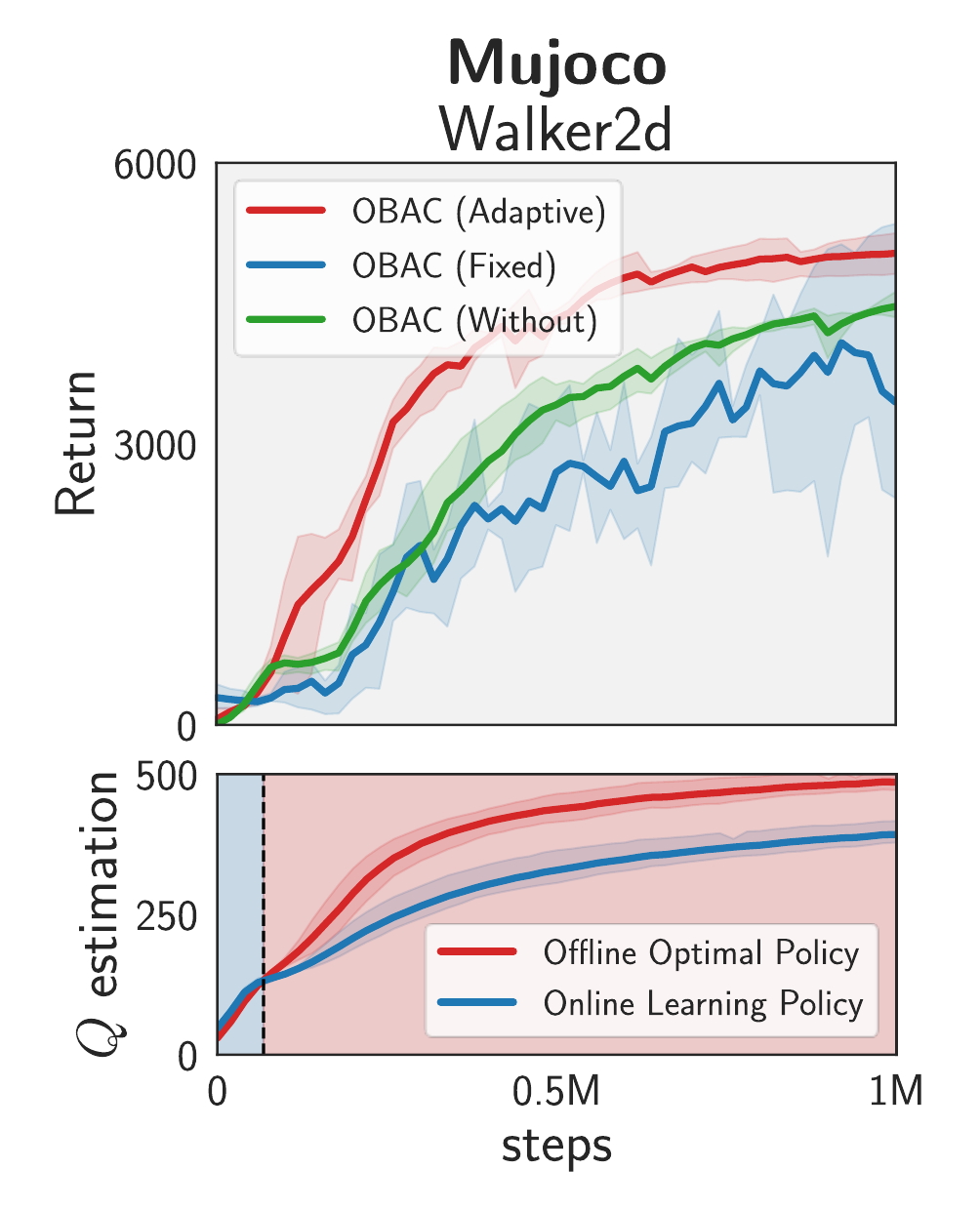}
    \includegraphics[width=0.49\linewidth]{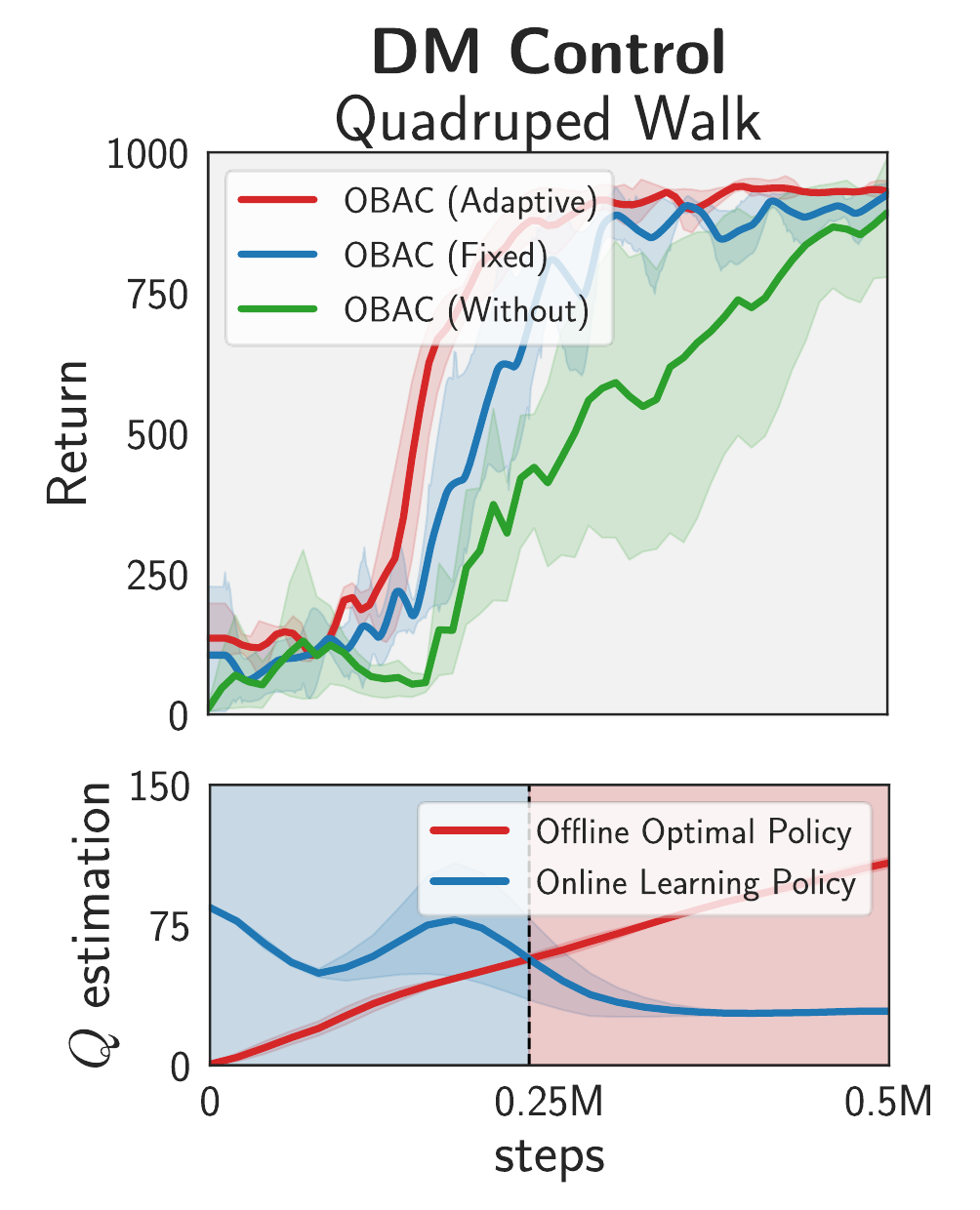}
    \caption{\textbf{Ablation on adaptive constraints.} \textit{(Top)}: Policy performance, and \textit{(Bottom)}: $Q$ value comparison. We adopt two tasks from Mujoco suite and DMControl suite to demonstrate the necessity of adaptive constraints in OBAC. Mean of 5 runs; shaded areas are $95\%$ confidence intervals.}
    % \vspace{-13pt}
    \label{fig:walker2d-fix-ablation}
\end{figure}
\begin{figure*}[t]
    \centering
    \begin{subfigure}{0.49\linewidth}
        \centering
        \includegraphics[width=0.49\linewidth]{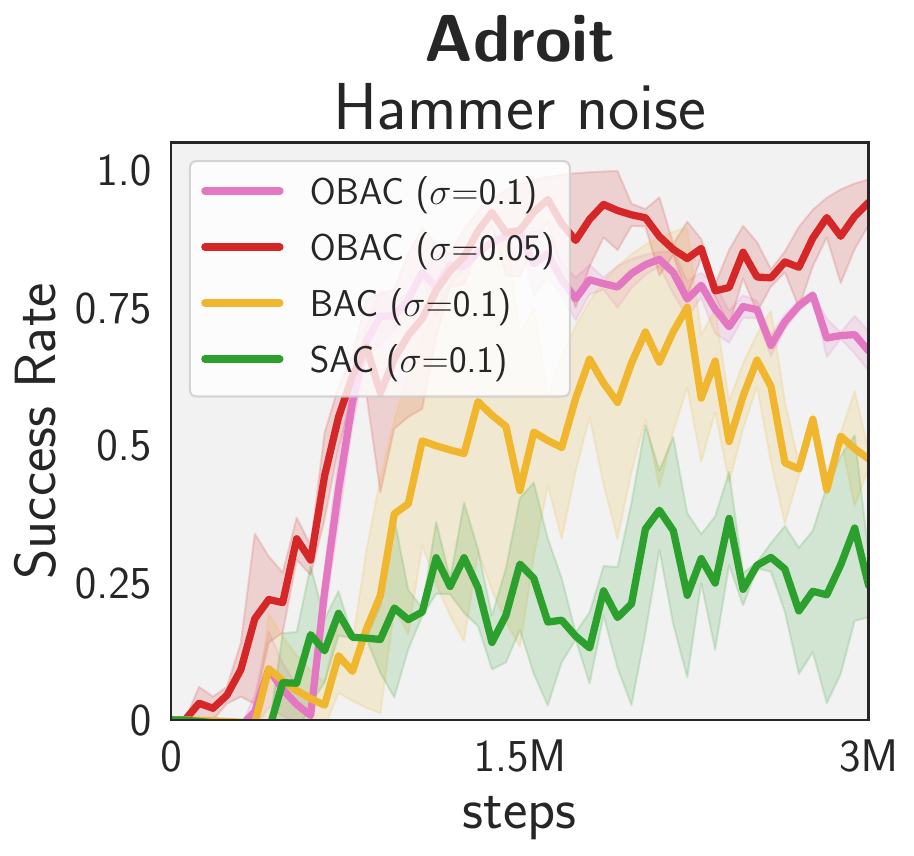}
        \includegraphics[width=0.49\linewidth]{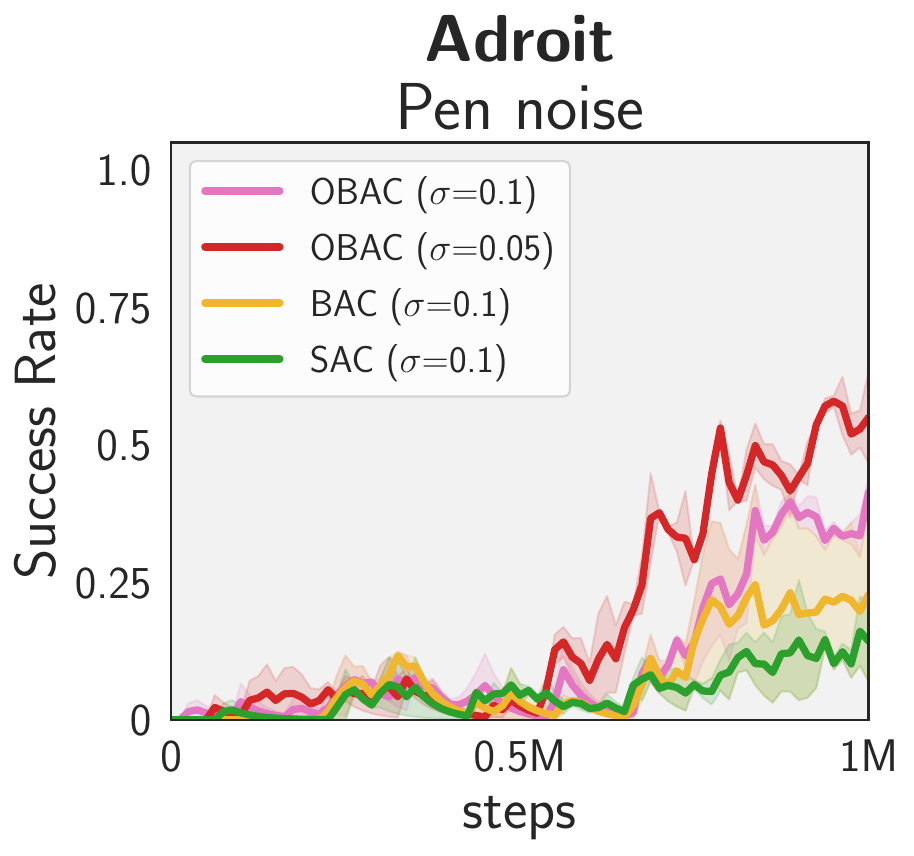}
        \caption{\textbf{Noise tasks.}}
        \label{fig:noise_ablation}
    \end{subfigure}
    \hfill
    \begin{subfigure}{0.49\linewidth}
        \centering
        \includegraphics[width=0.49\linewidth]{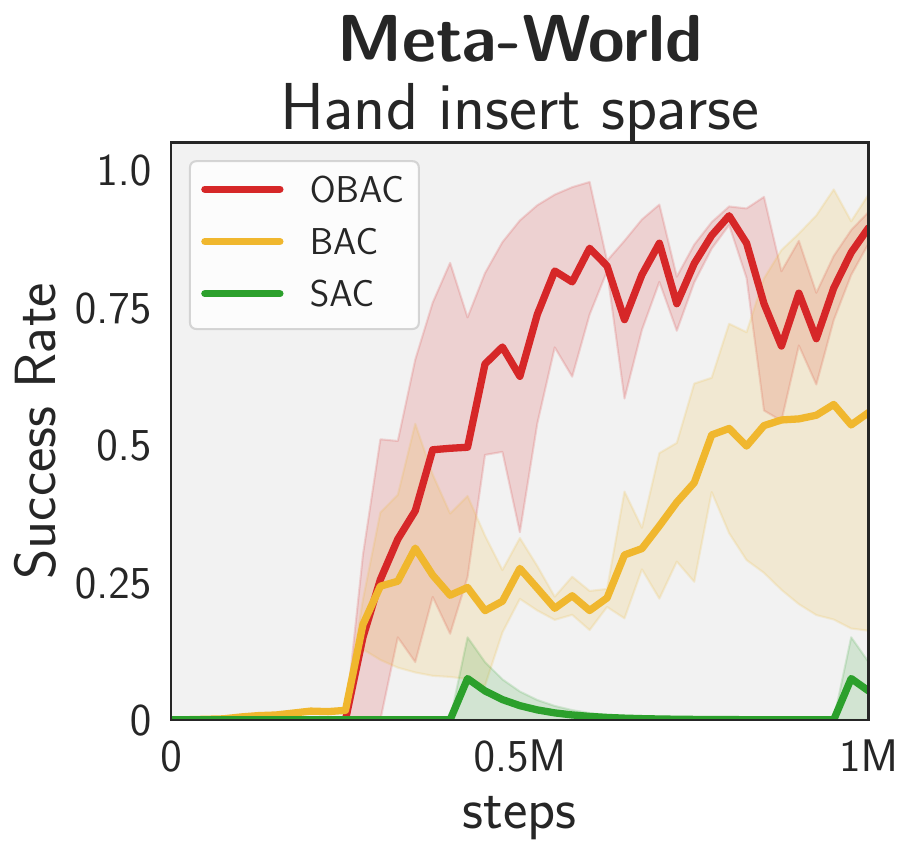}
        \includegraphics[width=0.49\linewidth]{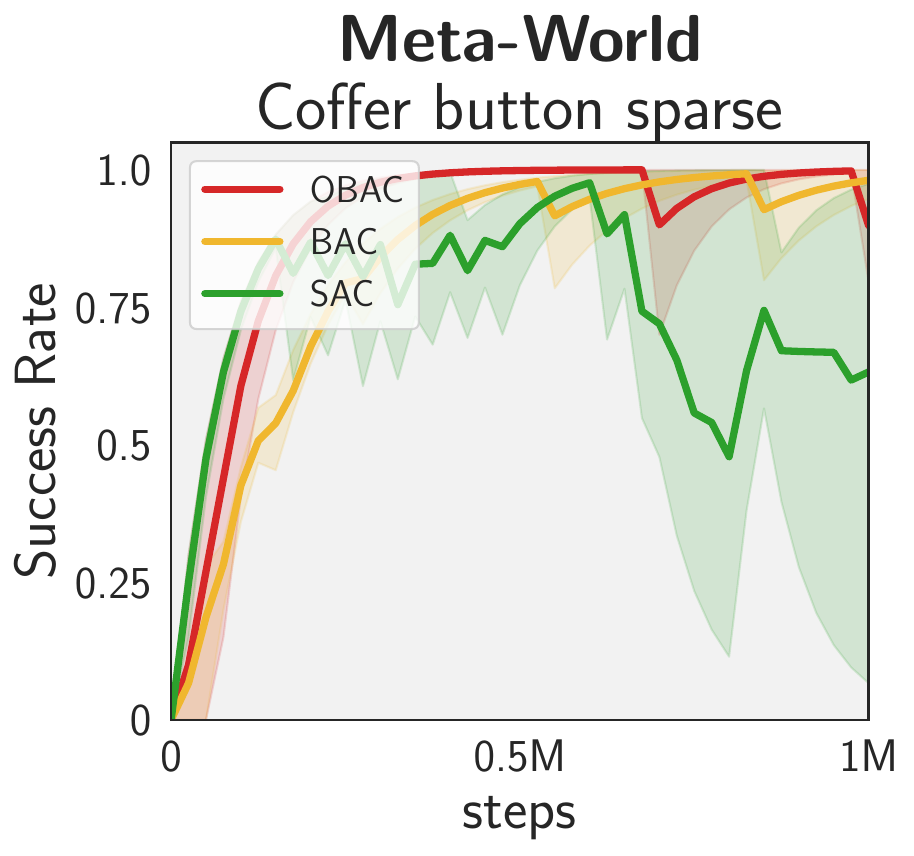} 
        \caption{\textbf{Sparse tasks.}}
        \label{fig:sparse_ablation}
    \end{subfigure}
    \caption{\textbf{Extension on noise and sparse tasks.} We evaluate OBAC in noisy and sparse tasks for comprehensive investigations. Mean of $5$ runs, shaded areas are $95\%$ confidence intervals.}
    \label{fig:ablation_figure}
\end{figure*}

\noindent\textbf{Extension in noise and sparse tasks.}\quad
To better ground the effectiveness of OBAC, we applied it to a set of stochastic tasks, where a Gaussian noise with different standard deviations $\sigma$ to each dimension of actions at every step, including $\sigma=0.05$ and $\sigma=0.1$ to represent the increasing environmental stochasticity. Compared to our baselines BAC and SAC, as shown in Figure~\ref{fig:noise_ablation}, OBAC consistently demonstrates superior robustness, even in challenging scenarios with higher noise levels (\emph{e.g.}, $\sigma=0.1$). Furthermore, we extended the evaluation to tasks featuring sparse rewards, where success yields a reward of $1$ and failure results in a reward of $0$. The results in Figure~\ref{fig:sparse_ablation} highlight OBAC's capacity to outperform the baselines in sparse reward scenarios, underscoring its effectiveness across diverse challenges. Appendix~\ref{appendix_noise} provides a detailed comparison of performance decline rates to better explain the robustness of OBAC.

\noindent\textbf{More Ablations.}\quad
In addition to these results, we provide the variant of OBAC, where a deterministic policy is employed for online learning, as shown in Figure~\ref{fig:variant_results}, Appendix~\ref{sec:more_ablation}. The results show that when employing a deterministic online learning policy, the adaptive constraint can also provide performance improvement across various scenarios, indicating the versatility of our methods.

\section{Related Works}
We briefly summarize relevant methods about off-policy RL, offline RL and off-policy RL with offline techniques.

\noindent\textbf{Off-policy RL.}\quad
Off-policy RL offers a general paradigm for reusing previously collected data in current policy training~\cite{munos2016safe,peng2019advantage,rakelly2019efficient,duan2020minimax}. This involves alternating between policy evaluation, where the performance is assessed by computing a value function, and policy improvement, where the policy is updated based on the value function~\cite{chan2022greedification}. Many approaches focus on accurate value function estimation~\cite{fujimoto2018addressing,moskovitz2021tactical,wei2022controlling}, exploration term design~\cite{haarnoja2018soft,liu2020provably}, and real-world applications~\cite{delarue2020reinforcement,chen2022reinforcement}. However, in the training process, the \emph{i.i.d} assumption of data~\cite{judah2014active,chen2019information,zhang2021sample,liu2022understanding} makes them often overlook the inherent domain knowledge~\cite{deramo2020sharing}, which is available as an offline optimal policy by re-stitching different state-action pairs~\cite{xu2022policy} in the replay buffer. This oversight leads to issues of sample inefficiency and training instability, which OBAC addresses by leveraging this knowledge to enhance policy performance.

\noindent\textbf{Offline RL.}\quad
Given a fixed dataset without environmental interaction, offline RL aims to train a policy within the support of the training distribution~\cite{levine2020offline,prudencio2023survey}, categorized by either explicit or implicit constraints. Explicit constraints involve learning an empirical distribution of behavior policy~\cite{fujimoto2019off,kumar2019stabilizing} for policy regularization or improving in-distribution action values while decreasing out-of-distribution (OOD) actions' values~\cite{kumar2020conservative}. Implicit constraints~\cite{kostrikov2021offline,xiao2022sample,mao2024odice} achieve similar value/policy regularization without additional behavior approximation through implicit constraints. However, the pessimistic principle in these methods makes their application challenging in online settings~\cite{xie2021policy}, as the policy tends to be too conservative to explore better actions. While, our work adopts offline RL techniques within an online training setting, where the data is growing as the policy training and exploring, avoiding the conservative nature of offline RL.

\noindent\textbf{Bridging off-policy and offline RL.}\quad
Recent works have explored the settings of offline-to-online RL~\cite{lee2022offline,yu2023actor,zhang2022policy} or Hybrid RL~\cite{niu2022trust,panaganti2022robust}, leveraging additional offline datasets for policy pretraining~\cite{zhang2022policy,yu2023actor} or for training data augmentation~\cite{wagenmaker2023leveraging,ball2023efficient,song2022hybrid,uchendu2023jump} to fine-tune the learning policy and improve sample efficiency. In contrast, OBAC's training can start with zero data, following a pure online setting without pretraining. Some similar methods introduce the offline optimal $Q$-value, estimated by the transitions observed in the replay memory, to regularize the $Q$-value of the online learning policy~\cite{zhang2022replay,ji2023seizing}. However, our motivating examples demonstrate that offline optimal $Q$-value may be suboptimal to the learning one, especially in the early training stage, leading to a potential drawback for the learning policy. OBAC addresses this issue by introducing an adaptive constraint that smartly determines the timing of introducing the offline optimal policy.

\section{Conclusion}
In this work, we propose a novel off-policy RL framework OBAC, where an agent can exploit an offline optimal policy concurrently trained by the replay buffer to boost the performance of the online learning policy. Based on the theoretical results of offline-boosted policy iteration, which shows the convergence to the optimal policy, this naturally derives a practical algorithm with low computational cost and high sample efficiency. Abundant experimental results demonstrate the superiority of OBAC when compared with both model-free and model-based RL baselines. Our findings offer valuable insights 
that leveraging collected data to derive an offline optimal policy can effectively improve the sample efficiency, introducing a novel and practical approach to combining off-policy RL and offline RL. Future works of OBAC can be extended by applying more advanced offline RL methods when deriving the offline optimal policy, or adding more exploration in the online policy to facilitate exploring better data, thus allowing OBAC to improve policy performance from both exploitation and exploration.

% Acknowledgements should only appear in the accepted version.
\section*{Acknowledgements}

This work was done at Tsinghua University and supported by the Xiaomi Innovation Joint Fund of the Beijing Municipal Natural Science Foundation (L233006), partly by the National Natural Science Foundation of China under Grant (U22A2057) and the THU-Bosch JCML Center. We would like to thank Liyuan Mao for his insightful advice on the discussion of Proposition 3.1, and we appreciate the reviewers' generous help to improve our paper.

\section*{Impact Statement}
This work contributes to advancing the field of Reinforcement Learning (RL), particularly in the domain of off-policy RL algorithms. The proposed algorithm holds potential implications for real-world applications, especially in areas such as robotics. As a general off-policy RL algorithm, the main uncertainty of the proposed method might be the fact that the RL training process itself is somewhat brittle. Besides, the exploration of an RL agent in real-world environments may require several safety considerations, to avoid unsafe behavior during the exploration process.
\bibliography{example_paper}

\begin{thebibliography}{65}
\providecommand{\natexlab}[1]{#1}
\providecommand{\url}[1]{\texttt{#1}}
\expandafter\ifx\csname urlstyle\endcsname\relax
  \providecommand{\doi}[1]{doi: #1}\else
  \providecommand{\doi}{doi: \begingroup \urlstyle{rm}\Url}\fi

\bibitem[Ball et~al.(2023)Ball, Smith, Kostrikov, and
  Levine]{ball2023efficient}
Ball, P.~J., Smith, L., Kostrikov, I., and Levine, S.
\newblock Efficient online reinforcement learning with offline data.
\newblock In \emph{International Conference on Machine Learning}, 2023.

\bibitem[Bellemare et~al.(2017)Bellemare, Dabney, and
  Munos]{bellemare2017distributional}
Bellemare, M.~G., Dabney, W., and Munos, R.
\newblock A distributional perspective on reinforcement learning.
\newblock In \emph{International conference on machine learning}, 2017.

\bibitem[Bellman(1957)]{bellman1957markovian}
Bellman, R.
\newblock A markovian decision process.
\newblock \emph{Journal of mathematics and mechanics}, pp.\  679--684, 1957.

\bibitem[Caggiano et~al.(2022)Caggiano, Wang, Durandau, Sartori, and
  Kumar]{caggiano2022myosuite}
Caggiano, V., Wang, H., Durandau, G., Sartori, M., and Kumar, V.
\newblock Myosuite--a contact-rich simulation suite for musculoskeletal motor
  control.
\newblock \emph{Proceedings of Machine Learning Research}, 168, 2022.

\bibitem[Chan et~al.(2022)Chan, Silva, Lim, Kozuno, Mahmood, and
  White]{chan2022greedification}
Chan, A., Silva, H., Lim, S., Kozuno, T., Mahmood, A.~R., and White, M.
\newblock Greedification operators for policy optimization: Investigating
  forward and reverse kl divergences.
\newblock \emph{The Journal of Machine Learning Research}, 23\penalty0
  (1):\penalty0 11474--11552, 2022.

\bibitem[Chen \& Jiang(2019)Chen and Jiang]{chen2019information}
Chen, J. and Jiang, N.
\newblock Information-theoretic considerations in batch reinforcement learning.
\newblock In \emph{International Conference on Machine Learning}, 2019.

\bibitem[Chen et~al.(2022)Chen, Qu, Tang, Low, and Li]{chen2022reinforcement}
Chen, X., Qu, G., Tang, Y., Low, S., and Li, N.
\newblock Reinforcement learning for selective key applications in power
  systems: Recent advances and future challenges.
\newblock \emph{IEEE Transactions on Smart Grid}, 13\penalty0 (4):\penalty0
  2935--2958, 2022.

\bibitem[Delarue et~al.(2020)Delarue, Anderson, and
  Tjandraatmadja]{delarue2020reinforcement}
Delarue, A., Anderson, R., and Tjandraatmadja, C.
\newblock Reinforcement learning with combinatorial actions: An application to
  vehicle routing.
\newblock In \emph{Advances in Neural Information Processing Systems}, 2020.

\bibitem[Denardo(1967)]{denardo1967contraction}
Denardo, E.~V.
\newblock Contraction mappings in the theory underlying dynamic programming.
\newblock \emph{Siam Review}, 9\penalty0 (2):\penalty0 165--177, 1967.

\bibitem[DEramo et~al.(2020)DEramo, Tateo, Bonarini, Restelli, and
  Peters]{deramo2020sharing}
DEramo, C., Tateo, D., Bonarini, A., Restelli, M., and Peters, J.
\newblock Sharing knowledge in multi-task deep reinforcement learning.
\newblock In \emph{International Conference on Learning Representations}, 2020.

\bibitem[Duan et~al.(2020)Duan, Jia, and Wang]{duan2020minimax}
Duan, Y., Jia, Z., and Wang, M.
\newblock Minimax-optimal off-policy evaluation with linear function
  approximation.
\newblock In \emph{International Conference on Machine Learning}, 2020.

\bibitem[Fujimoto \& Gu(2021)Fujimoto and Gu]{fujimoto2021minimalist}
Fujimoto, S. and Gu, S.~S.
\newblock A minimalist approach to offline reinforcement learning.
\newblock In \emph{Advances in neural information processing systems}, 2021.

\bibitem[Fujimoto et~al.(2018)Fujimoto, Hoof, and
  Meger]{fujimoto2018addressing}
Fujimoto, S., Hoof, H., and Meger, D.
\newblock Addressing function approximation error in actor-critic methods.
\newblock In \emph{International conference on machine learning}, 2018.

\bibitem[Fujimoto et~al.(2019)Fujimoto, Meger, and Precup]{fujimoto2019off}
Fujimoto, S., Meger, D., and Precup, D.
\newblock Off-policy deep reinforcement learning without exploration.
\newblock In \emph{International conference on machine learning}, 2019.

\bibitem[Gu et~al.(2022)Gu, Xiang, Li, Ling, Liu, Mu, Tang, Tao, Wei, Yao,
  et~al.]{gu2022maniskill2}
Gu, J., Xiang, F., Li, X., Ling, Z., Liu, X., Mu, T., Tang, Y., Tao, S., Wei,
  X., Yao, Y., et~al.
\newblock Maniskill2: A unified benchmark for generalizable manipulation
  skills.
\newblock In \emph{The Eleventh International Conference on Learning
  Representations}, 2022.

\bibitem[Haarnoja et~al.(2018{\natexlab{a}})Haarnoja, Zhou, Abbeel, and
  Levine]{haarnoja2018soft}
Haarnoja, T., Zhou, A., Abbeel, P., and Levine, S.
\newblock Soft actor-critic: Off-policy maximum entropy deep reinforcement
  learning with a stochastic actor.
\newblock In \emph{International conference on machine learning},
  2018{\natexlab{a}}.

\bibitem[Haarnoja et~al.(2018{\natexlab{b}})Haarnoja, Zhou, Hartikainen,
  Tucker, Ha, Tan, Kumar, Zhu, Gupta, Abbeel, et~al.]{haarnoja2018softappli}
Haarnoja, T., Zhou, A., Hartikainen, K., Tucker, G., Ha, S., Tan, J., Kumar,
  V., Zhu, H., Gupta, A., Abbeel, P., et~al.
\newblock Soft actor-critic algorithms and applications.
\newblock \emph{arXiv preprint arXiv:1812.05905}, 2018{\natexlab{b}}.

\bibitem[Hansen et~al.(2024)Hansen, Su, and Wang]{hansen2023td}
Hansen, N., Su, H., and Wang, X.
\newblock Td-mpc2: Scalable, robust world models for continuous control.
\newblock In \emph{International Conference on Learning Representations}, 2024.

\bibitem[Ji et~al.(2022)Ji, Luo, Sun, Jing, He, and Huang]{ji2022update}
Ji, T., Luo, Y., Sun, F., Jing, M., He, F., and Huang, W.
\newblock When to update your model: Constrained model-based reinforcement
  learning.
\newblock In \emph{Advances in Neural Information Processing Systems}, 2022.

\bibitem[Ji et~al.(2023)Ji, Luo, Sun, Zhan, Zhang, and Xu]{ji2023seizing}
Ji, T., Luo, Y., Sun, F., Zhan, X., Zhang, J., and Xu, H.
\newblock Seizing serendipity: Exploiting the value of past success in
  off-policy actor-critic.
\newblock \emph{arXiv preprint arXiv:2306.02865}, 2023.

\bibitem[Judah et~al.(2014)Judah, Fern, Dietterich, and
  Tadepalli]{judah2014active}
Judah, K., Fern, A.~P., Dietterich, T.~G., and Tadepalli, P.
\newblock Active lmitation learning: formal and practical reductions to iid
  learning.
\newblock \emph{J. Mach. Learn. Res.}, 15\penalty0 (1):\penalty0 3925--3963,
  2014.

\bibitem[Kallus \& Uehara(2020)Kallus and Uehara]{kallus2020double}
Kallus, N. and Uehara, M.
\newblock Double reinforcement learning for efficient off-policy evaluation in
  markov decision processes.
\newblock \emph{The Journal of Machine Learning Research}, 21\penalty0
  (1):\penalty0 6742--6804, 2020.

\bibitem[Kiran et~al.(2021)Kiran, Sobh, Talpaert, Mannion, Al~Sallab, Yogamani,
  and P{\'e}rez]{kiran2021deep}
Kiran, B.~R., Sobh, I., Talpaert, V., Mannion, P., Al~Sallab, A.~A., Yogamani,
  S., and P{\'e}rez, P.
\newblock Deep reinforcement learning for autonomous driving: A survey.
\newblock \emph{IEEE Transactions on Intelligent Transportation Systems},
  23\penalty0 (6):\penalty0 4909--4926, 2021.

\bibitem[Kostrikov et~al.(2021)Kostrikov, Nair, and
  Levine]{kostrikov2021offline}
Kostrikov, I., Nair, A., and Levine, S.
\newblock Offline reinforcement learning with implicit q-learning.
\newblock In \emph{International Conference on Learning Representations}, 2021.

\bibitem[Kumar et~al.(2019)Kumar, Fu, Soh, Tucker, and
  Levine]{kumar2019stabilizing}
Kumar, A., Fu, J., Soh, M., Tucker, G., and Levine, S.
\newblock Stabilizing off-policy q-learning via bootstrapping error reduction.
\newblock In \emph{Advances in Neural Information Processing Systems}, 2019.

\bibitem[Kumar et~al.(2020)Kumar, Zhou, Tucker, and
  Levine]{kumar2020conservative}
Kumar, A., Zhou, A., Tucker, G., and Levine, S.
\newblock Conservative q-learning for offline reinforcement learning.
\newblock In \emph{Advances in Neural Information Processing Systems}, 2020.

\bibitem[Kumar \& Todorov(2015)Kumar and Todorov]{kumar2015mujoco}
Kumar, V. and Todorov, E.
\newblock Mujoco haptix: A virtual reality system for hand manipulation.
\newblock In \emph{2015 IEEE-RAS 15th International Conference on Humanoid
  Robots (Humanoids)}, pp.\  657--663. IEEE, 2015.

\bibitem[Lee et~al.(2020)Lee, Nagabandi, Abbeel, and Levine]{lee2020stochastic}
Lee, A.~X., Nagabandi, A., Abbeel, P., and Levine, S.
\newblock Stochastic latent actor-critic: Deep reinforcement learning with a
  latent variable model.
\newblock In \emph{Advances in Neural Information Processing Systems}, 2020.

\bibitem[Lee et~al.(2022)Lee, Seo, Lee, Abbeel, and Shin]{lee2022offline}
Lee, S., Seo, Y., Lee, K., Abbeel, P., and Shin, J.
\newblock Offline-to-online reinforcement learning via balanced replay and
  pessimistic q-ensemble.
\newblock In \emph{Conference on Robot Learning}, 2022.

\bibitem[Levine et~al.(2020)Levine, Kumar, Tucker, and Fu]{levine2020offline}
Levine, S., Kumar, A., Tucker, G., and Fu, J.
\newblock Offline reinforcement learning: Tutorial, review, and perspectives on
  open problems.
\newblock \emph{arXiv preprint arXiv:2005.01643}, 2020.

\bibitem[Liu et~al.(2022)Liu, Viano, and Cevher]{liu2022understanding}
Liu, F., Viano, L., and Cevher, V.
\newblock Understanding deep neural function approximation in reinforcement
  learning via $\epsilon$-greedy exploration.
\newblock In \emph{Advances in Neural Information Processing Systems}, 2022.

\bibitem[Liu et~al.(2020)Liu, Swaminathan, Agarwal, and
  Brunskill]{liu2020provably}
Liu, Y., Swaminathan, A., Agarwal, A., and Brunskill, E.
\newblock Provably good batch off-policy reinforcement learning without great
  exploration.
\newblock In \emph{Advances in neural information processing systems}, 2020.

\bibitem[Mao et~al.(2024)Mao, Xu, Zhang, and Zhan]{mao2024odice}
Mao, L., Xu, H., Zhang, W., and Zhan, X.
\newblock Odice: Revealing the mystery of distribution correction estimation
  via orthogonal-gradient update.
\newblock \emph{arXiv preprint arXiv:2402.00348}, 2024.

\bibitem[Mirhoseini et~al.(2021)Mirhoseini, Goldie, Yazgan, Jiang, Songhori,
  Wang, Lee, Johnson, Pathak, Nazi, et~al.]{mirhoseini2021graph}
Mirhoseini, A., Goldie, A., Yazgan, M., Jiang, J.~W., Songhori, E., Wang, S.,
  Lee, Y.-J., Johnson, E., Pathak, O., Nazi, A., et~al.
\newblock A graph placement methodology for fast chip design.
\newblock \emph{Nature}, 594\penalty0 (7862):\penalty0 207--212, 2021.

\bibitem[Mnih et~al.(2015)Mnih, Kavukcuoglu, Silver, Rusu, Veness, Bellemare,
  Graves, Riedmiller, Fidjeland, Ostrovski, et~al.]{mnih2015human}
Mnih, V., Kavukcuoglu, K., Silver, D., Rusu, A.~A., Veness, J., Bellemare,
  M.~G., Graves, A., Riedmiller, M., Fidjeland, A.~K., Ostrovski, G., et~al.
\newblock Human-level control through deep reinforcement learning.
\newblock \emph{nature}, 518\penalty0 (7540):\penalty0 529--533, 2015.

\bibitem[Moerland et~al.(2023)Moerland, Broekens, Plaat, Jonker,
  et~al.]{moerland2023model}
Moerland, T.~M., Broekens, J., Plaat, A., Jonker, C.~M., et~al.
\newblock Model-based reinforcement learning: A survey.
\newblock \emph{Foundations and Trends{\textregistered} in Machine Learning},
  16\penalty0 (1):\penalty0 1--118, 2023.

\bibitem[Moskovitz et~al.(2021)Moskovitz, Parker-Holder, Pacchiano, Arbel, and
  Jordan]{moskovitz2021tactical}
Moskovitz, T., Parker-Holder, J., Pacchiano, A., Arbel, M., and Jordan, M.
\newblock Tactical optimism and pessimism for deep reinforcement learning.
\newblock \emph{Advances in Neural Information Processing Systems},
  34:\penalty0 12849--12863, 2021.

\bibitem[Munos et~al.(2016)Munos, Stepleton, Harutyunyan, and
  Bellemare]{munos2016safe}
Munos, R., Stepleton, T., Harutyunyan, A., and Bellemare, M.
\newblock Safe and efficient off-policy reinforcement learning.
\newblock In \emph{Advances in neural information processing systems}, 2016.

\bibitem[Niu et~al.(2022)Niu, Qiu, Li, Zhou, HU, Zhan, et~al.]{niu2022trust}
Niu, H., Qiu, Y., Li, M., Zhou, G., HU, J., Zhan, X., et~al.
\newblock When to trust your simulator: Dynamics-aware hybrid
  offline-and-online reinforcement learning.
\newblock In \emph{Advances in Neural Information Processing Systems}, 2022.

\bibitem[Panaganti et~al.(2022)Panaganti, Xu, Kalathil, and
  Ghavamzadeh]{panaganti2022robust}
Panaganti, K., Xu, Z., Kalathil, D., and Ghavamzadeh, M.
\newblock Robust reinforcement learning using offline data.
\newblock In \emph{Advances in neural information processing systems}, 2022.

\bibitem[Peng et~al.(2019)Peng, Kumar, Zhang, and Levine]{peng2019advantage}
Peng, X.~B., Kumar, A., Zhang, G., and Levine, S.
\newblock Advantage-weighted regression: Simple and scalable off-policy
  reinforcement learning.
\newblock \emph{arXiv preprint arXiv:1910.00177}, 2019.

\bibitem[Perolat et~al.(2022)Perolat, De~Vylder, Hennes, Tarassov, Strub,
  de~Boer, Muller, Connor, Burch, Anthony, et~al.]{perolat2022mastering}
Perolat, J., De~Vylder, B., Hennes, D., Tarassov, E., Strub, F., de~Boer, V.,
  Muller, P., Connor, J.~T., Burch, N., Anthony, T., et~al.
\newblock Mastering the game of stratego with model-free multiagent
  reinforcement learning.
\newblock \emph{Science}, 378\penalty0 (6623):\penalty0 990--996, 2022.

\bibitem[Peters et~al.(2010)Peters, Mulling, and Altun]{peters2010relative}
Peters, J., Mulling, K., and Altun, Y.
\newblock Relative entropy policy search.
\newblock In \emph{Proceedings of the AAAI Conference on Artificial
  Intelligence}, 2010.

\bibitem[Prudencio et~al.(2023)Prudencio, Maximo, and
  Colombini]{prudencio2023survey}
Prudencio, R.~F., Maximo, M.~R., and Colombini, E.~L.
\newblock A survey on offline reinforcement learning: Taxonomy, review, and
  open problems.
\newblock \emph{IEEE Transactions on Neural Networks and Learning Systems},
  2023.

\bibitem[Rakelly et~al.(2019)Rakelly, Zhou, Finn, Levine, and
  Quillen]{rakelly2019efficient}
Rakelly, K., Zhou, A., Finn, C., Levine, S., and Quillen, D.
\newblock Efficient off-policy meta-reinforcement learning via probabilistic
  context variables.
\newblock In \emph{International conference on machine learning}, 2019.

\bibitem[Shi et~al.(2022)Shi, Li, Wei, Chen, and Chi]{shi2022pessimistic}
Shi, L., Li, G., Wei, Y., Chen, Y., and Chi, Y.
\newblock Pessimistic q-learning for offline reinforcement learning: Towards
  optimal sample complexity.
\newblock In \emph{International Conference on Machine Learning}, 2022.

\bibitem[Song et~al.(2022)Song, Zhou, Sekhari, Bagnell, Krishnamurthy, and
  Sun]{song2022hybrid}
Song, Y., Zhou, Y., Sekhari, A., Bagnell, D., Krishnamurthy, A., and Sun, W.
\newblock Hybrid rl: Using both offline and online data can make rl efficient.
\newblock In \emph{International Conference on Learning Representations}, 2022.

\bibitem[Tassa et~al.(2018)Tassa, Doron, Muldal, Erez, Li, Casas, Budden,
  Abdolmaleki, Merel, Lefrancq, et~al.]{tassa2018deepmind}
Tassa, Y., Doron, Y., Muldal, A., Erez, T., Li, Y., Casas, D. d.~L., Budden,
  D., Abdolmaleki, A., Merel, J., Lefrancq, A., et~al.
\newblock Deepmind control suite.
\newblock \emph{arXiv preprint arXiv:1801.00690}, 2018.

\bibitem[Todorov et~al.(2012)Todorov, Erez, and Tassa]{todorov2012mujoco}
Todorov, E., Erez, T., and Tassa, Y.
\newblock Mujoco: A physics engine for model-based control.
\newblock In \emph{2012 IEEE/RSJ international conference on intelligent robots
  and systems}, pp.\  5026--5033. IEEE, 2012.

\bibitem[Uchendu et~al.(2023)Uchendu, Xiao, Lu, Zhu, Yan, Simon, Bennice, Fu,
  Ma, Jiao, et~al.]{uchendu2023jump}
Uchendu, I., Xiao, T., Lu, Y., Zhu, B., Yan, M., Simon, J., Bennice, M., Fu,
  C., Ma, C., Jiao, J., et~al.
\newblock Jump-start reinforcement learning.
\newblock In \emph{International Conference on Machine Learning}, 2023.

\bibitem[Van~Hasselt et~al.(2016)Van~Hasselt, Guez, and Silver]{van2016deep}
Van~Hasselt, H., Guez, A., and Silver, D.
\newblock Deep reinforcement learning with double q-learning.
\newblock In \emph{Proceedings of the AAAI conference on artificial
  intelligence}, 2016.

\bibitem[Wagenmaker \& Pacchiano(2023)Wagenmaker and
  Pacchiano]{wagenmaker2023leveraging}
Wagenmaker, A. and Pacchiano, A.
\newblock Leveraging offline data in online reinforcement learning.
\newblock In \emph{International Conference on Machine Learning}, 2023.

\bibitem[Wang et~al.(2022)Wang, Wang, Liang, Zhao, Huang, Xu, Dai, and
  Miao]{wang2022deep}
Wang, X., Wang, S., Liang, X., Zhao, D., Huang, J., Xu, X., Dai, B., and Miao,
  Q.
\newblock Deep reinforcement learning: a survey.
\newblock \emph{IEEE Transactions on Neural Networks and Learning Systems},
  2022.

\bibitem[Wei et~al.(2022)Wei, Zhang, Liang, Li, and Li]{wei2022controlling}
Wei, W., Zhang, Y., Liang, J., Li, L., and Li, Y.
\newblock Controlling underestimation bias in reinforcement learning via
  quasi-median operation.
\newblock In \emph{Proceedings of the AAAI Conference on Artificial
  Intelligence}, 2022.

\bibitem[Xiao et~al.(2022)Xiao, Wang, Pan, White, and White]{xiao2022sample}
Xiao, C., Wang, H., Pan, Y., White, A., and White, M.
\newblock The in-sample softmax for offline reinforcement learning.
\newblock In \emph{International Conference on Learning Representations}, 2022.

\bibitem[Xie et~al.(2021{\natexlab{a}})Xie, Cheng, Jiang, Mineiro, and
  Agarwal]{xie2021bellman}
Xie, T., Cheng, C.-A., Jiang, N., Mineiro, P., and Agarwal, A.
\newblock Bellman-consistent pessimism for offline reinforcement learning.
\newblock In \emph{Advances in neural information processing systems},
  2021{\natexlab{a}}.

\bibitem[Xie et~al.(2021{\natexlab{b}})Xie, Jiang, Wang, Xiong, and
  Bai]{xie2021policy}
Xie, T., Jiang, N., Wang, H., Xiong, C., and Bai, Y.
\newblock Policy finetuning: Bridging sample-efficient offline and online
  reinforcement learning.
\newblock In \emph{Advances in neural information processing systems},
  2021{\natexlab{b}}.

\bibitem[Xu et~al.(2024)Xu, Zheng, Liang, Wang, Yuan, Ji, Luo, Liu, Yuan, Hua,
  et~al.]{xu2023drm}
Xu, G., Zheng, R., Liang, Y., Wang, X., Yuan, Z., Ji, T., Luo, Y., Liu, X.,
  Yuan, J., Hua, P., et~al.
\newblock Drm: Mastering visual reinforcement learning through dormant ratio
  minimization.
\newblock In \emph{International Conference on Learning Representations}, 2024.

\bibitem[Xu et~al.(2022{\natexlab{a}})Xu, Jiang, Jianxiong, and
  Zhan]{xu2022policy}
Xu, H., Jiang, L., Jianxiong, L., and Zhan, X.
\newblock A policy-guided imitation approach for offline reinforcement
  learning.
\newblock \emph{Advances in Neural Information Processing Systems},
  35:\penalty0 4085--4098, 2022{\natexlab{a}}.

\bibitem[Xu et~al.(2022{\natexlab{b}})Xu, Jiang, Li, Yang, Wang, Chan, and
  Zhan]{xu2022offline}
Xu, H., Jiang, L., Li, J., Yang, Z., Wang, Z., Chan, V. W.~K., and Zhan, X.
\newblock Offline rl with no ood actions: In-sample learning via implicit value
  regularization.
\newblock In \emph{International Conference on Learning Representations},
  2022{\natexlab{b}}.

\bibitem[Yu et~al.(2020)Yu, Quillen, He, Julian, Hausman, Finn, and
  Levine]{yu2020meta}
Yu, T., Quillen, D., He, Z., Julian, R., Hausman, K., Finn, C., and Levine, S.
\newblock Meta-world: A benchmark and evaluation for multi-task and meta
  reinforcement learning.
\newblock In \emph{Conference on robot learning}, 2020.

\bibitem[Yu \& Zhang(2023)Yu and Zhang]{yu2023actor}
Yu, Z. and Zhang, X.
\newblock Actor-critic alignment for offline-to-online reinforcement learning.
\newblock In \emph{International Conference on Machine Learning}, 2023.

\bibitem[Zhang et~al.(2022{\natexlab{a}})Zhang, Xiao, Wang, Jin, M{\"u}ller,
  et~al.]{zhang2022replay}
Zhang, H., Xiao, C., Wang, H., Jin, J., M{\"u}ller, M., et~al.
\newblock Replay memory as an empirical mdp: Combining conservative estimation
  with experience replay.
\newblock In \emph{International Conference on Learning Representations},
  2022{\natexlab{a}}.

\bibitem[Zhang et~al.(2022{\natexlab{b}})Zhang, Xu, and Yu]{zhang2022policy}
Zhang, H., Xu, W., and Yu, H.
\newblock Policy expansion for bridging offline-to-online reinforcement
  learning.
\newblock In \emph{International Conference on Learning Representations},
  2022{\natexlab{b}}.

\bibitem[Zhang et~al.(2021)Zhang, Kim, O'Donoghue, and Boyd]{zhang2021sample}
Zhang, J., Kim, J., O'Donoghue, B., and Boyd, S.
\newblock Sample efficient reinforcement learning with reinforce.
\newblock In \emph{Proceedings of the AAAI conference on artificial
  intelligence}, 2021.

\end{thebibliography}
\bibliographystyle{icml2024}

\newpage
\appendix
\onecolumn
\section{Theoretical Analyses}\label{sec:Theoretical_Analyses}
\begin{lemma}\label{Q_converge}
Given a fixed policy $\chi$, the Bellman Expectation Operator $\mathcal{T}^\chi$ is a $\gamma$-contraction mapping within a single evaluation step.
\end{lemma}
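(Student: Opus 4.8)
The plan is to verify the standard contraction property directly from the definition of $\mathcal{T}^\chi$ in~(\ref{evalua_for_Q}), working on the space of bounded state-action value functions equipped with the supremum norm $\Vert Q\Vert_\infty=\sup_{(s,a)\in\mathcal{S}\times\mathcal{A}}\vert Q(s,a)\vert$. First I would record that, since $\gamma<1$ and the reward is bounded, this space is a complete metric space, so it suffices to exhibit the $\gamma$-contraction. I then fix two arbitrary bounded functions $Q_1,Q_2$ and write out the pointwise difference $\mathcal{T}^\chi Q_1(s,a)-\mathcal{T}^\chi Q_2(s,a)$; the reward term $r(s,a)$ cancels, leaving $\gamma\,\mathbb{E}_{s'\sim\mathcal{P}(\cdot|s,a),\,a'\sim\chi(\cdot|s')}\big[Q_1(s',a')-Q_2(s',a')\big]$.

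Next I would bound this expression by moving the absolute value inside the expectation (Jensen's inequality, equivalently monotonicity of expectation) and then replacing the integrand by its supremum, giving $\vert\mathcal{T}^\chi Q_1(s,a)-\mathcal{T}^\chi Q_2(s,a)\vert\leq\gamma\,\mathbb{E}_{s',a'\sim\chi}\big[\vert Q_1(s',a')-Q_2(s',a')\vert\big]\leq\gamma\Vert Q_1-Q_2\Vert_\infty$ for every $(s,a)$. Taking the supremum over $(s,a)$ on the left then yields $\Vert\mathcal{T}^\chi Q_1-\mathcal{T}^\chi Q_2\Vert_\infty\leq\gamma\Vert Q_1-Q_2\Vert_\infty$, which is precisely the claimed $\gamma$-contraction because $\gamma\in[0,1)$.

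To obtain the ``single evaluation step'' convergence consequence that is used later, I would invoke the Banach fixed-point theorem: the contraction $\mathcal{T}^\chi$ has a unique fixed point, which is exactly $Q^\chi$ since it satisfies the Bellman expectation equation, and iterating $\mathcal{T}^\chi$ from any bounded initialization converges to $Q^\chi$ geometrically at rate $\gamma$. The corresponding statement for $V^\chi$ then follows immediately from $V^\chi(s)=\mathbb{E}_{a\sim\chi}[Q^\chi(s,a)]$ in~(\ref{V_evalua}), or alternatively by repeating the same contraction argument for the operator $V\mapsto\mathbb{E}_{a\sim\chi}\big[r(s,a)+\gamma\mathbb{E}_{s'}[V(s')]\big]$.

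There is essentially no serious obstacle here; the only mild point requiring care is ensuring that the iterates stay within the space of bounded functions so that $\Vert\cdot\Vert_\infty$ is finite and the space is complete, which I would handle by stating at the outset that rewards are bounded (so $\vert Q\vert\leq r_{\max}/(1-\gamma)$ on the relevant set) and restricting attention to that ball, which $\mathcal{T}^\chi$ maps into itself.
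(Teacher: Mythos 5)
Your proposal is correct and follows essentially the same route as the paper's proof: cancel the reward terms, move the absolute value inside the expectation, bound by the supremum norm, and conclude the $\gamma$-contraction, with convergence of repeated application to the unique fixed point noted afterward. The only difference is that you are slightly more explicit about completeness of the function space and the Banach fixed-point theorem, which the paper leaves implicit.
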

\begin{proof}
Let $Q_1(s,a)$ and $Q_2(s,a)$ be two arbitrary state-action value functions. Based on the definition of $\mathcal{T}^\chi$, we have
\begin{align}
\Vert \mathcal{T}^\chi Q_1(s,a) - \mathcal{T}^\chi Q_2(s,a)\Vert_\infty&=\Vert r(s,a)-\gamma\mathbb{E}_{s',a'\sim\chi}[Q_1(s',a')]-r(s,a)+\gamma\mathbb{E}_{s',a'\sim\chi}[Q_2(s',a')]\Vert_\infty\nonumber\\
&\leq \gamma\mathbb{E}_{s'}\vert \mathbb{E}_{a'\sim\chi}[Q_1(s',a')]-\mathbb{E}_{a'\sim\chi}[Q_2(s',a')] \vert\nonumber\\
&\leq \gamma\mathbb{E}_{s',a'\sim\chi}\vert Q_1(s',a') - Q_2(s',a')\vert\nonumber\\
&\leq \gamma\max_{s,a}\vert Q_1(s,a) - Q_2(s,a)\vert\nonumber\\
&=\gamma \Vert Q_1(s,a) - Q_2(s,a)\Vert_{\infty}.
\end{align}
Thus, we conclude that $\mathcal{T}^\chi$ is a $\gamma$-contraction mapping. Further, this property guarantees that given any fixed policy $\chi$, any initial $Q$ function will converge to a unique fixed point by repeatedly applying this operator.
\end{proof}

Recall the constrained optimization problem in policy improvement step, 
\begin{align*}
&\quad\quad\quad\quad\quad\pi_{k+1}=\arg\max_\pi\mathbb{E}_{a\sim\pi}[Q^{\pi_k}(s,a)]\\
&\text{s.t.}\ \int_{a\in\mathcal{A}}f\left(\frac{\pi(a|s)}{\mu^*_k(a|s)}\right)\mathbbm{1}\left(V^{\mu^*_k}(s)-V^{\pi_k}(s)\right)\mu^*_k(a|s)da\leq\epsilon,\label{eq:adative_constraint}\\
& \quad\quad\quad\quad\quad\ \int_{a\in\mathcal{A}}\pi(a|s) da=1, \ \forall s\in\mathcal{D},
\end{align*}
\begin{proposition}\label{closed_form_appendix}
For the constrained optimization problem defined by~(\ref{eq:optimization_problem})$\sim$(\ref{eq:policy_norm}), if $V^{\mu^*_k}(s)\geq V^{\pi_k}(s)$, the closed-form solution is
\begin{equation}
\pi_{k+1}=\frac{1}{Z(s)}\mu^*_k(a|s)\left(f'\right)^{-1}\Big(Q^{\pi_k}(s,a)\Big),
\end{equation}
where $Z(s)$ is a partition function to normalise the action distribution. Or, when $V^{\mu^*_k}(s)<V^{\pi_k}(s)$, $\pi_{k+1}$ is an ordinary solution to maximize $Q^{\pi_k}(s,a)$.
\end{proposition}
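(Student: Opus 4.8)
The plan is to solve the constrained optimization problem via Lagrangian duality and the KKT conditions, treating the two cases of the indicator function separately. First I would observe that the indicator $\mathbbm{1}(V^{\mu^*_k}(s)-V^{\pi_k}(s))$ is a constant with respect to the optimization variable $\pi$ for each fixed state $s$, so the constraint~(\ref{eq:adative_constraint}) either is vacuous (when $V^{\mu^*_k}(s)<V^{\pi_k}(s)$, the integrand vanishes, so any normalized $\pi$ satisfies it) or reduces to $\int_a f\!\left(\pi(a|s)/\mu^*_k(a|s)\right)\mu^*_k(a|s)\,da\leq\epsilon$ (when $V^{\mu^*_k}(s)\geq V^{\pi_k}(s)$). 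In the vacuous case the problem is just $\arg\max_\pi\mathbb{E}_{a\sim\pi}[Q^{\pi_k}(s,a)]$ subject to normalization, whose solution is the ordinary greedy/improvement step, giving the second branch of the proposition immediately.

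For the nontrivial case I would form the Lagrangian
\[
L(\pi,\beta,\eta) = \int_a \pi(a|s)Q^{\pi_k}(s,a)\,da - \beta\!\left(\int_a f\!\left(\tfrac{\pi(a|s)}{\mu^*_k(a|s)}\right)\mu^*_k(a|s)\,da - \epsilon\right) - \eta\!\left(\int_a \pi(a|s)\,da - 1\right),
\]
with $\beta\geq 0$ the multiplier for the regularization constraint and $\eta$ the multiplier for normalization. Taking the functional derivative with respect to $\pi(a|s)$ and setting it to zero yields the stationarity condition $Q^{\pi_k}(s,a) - \beta f'\!\left(\pi(a|s)/\mu^*_k(a|s)\right) - \eta = 0$, i.e. $\pi(a|s)/\mu^*_k(a|s) = (f')^{-1}\!\left((Q^{\pi_k}(s,a)-\eta)/\beta\right)$. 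Hence $\pi_{k+1}(a|s) = \mu^*_k(a|s)\,(f')^{-1}\!\left((Q^{\pi_k}(s,a)-\eta)/\beta\right)$, and absorbing the constants $\eta,\beta$ into the normalizer (and, consistent with the paper's convention and its later choice $f(x)=x\log x$, rescaling $Q^{\pi_k}$ appropriately), this is exactly $\pi_{k+1}=\frac{1}{Z(s)}\mu^*_k(a|s)(f')^{-1}\!\left(Q^{\pi_k}(s,a)\right)$ with $Z(s)=\int_a \mu^*_k(a|s)(f')^{-1}(Q^{\pi_k}(s,a))\,da$ chosen so that $\int_a \pi_{k+1}(a|s)\,da=1$. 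I would note that convexity of $f$ makes $f'$ increasing, so $(f')^{-1}$ is well defined and the stationary point is the unique maximizer; complementary slackness fixes $\beta$ so that the constraint is tight at $\epsilon$.

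The main obstacle—really the only subtle point—is the bookkeeping of the Lagrange multipliers and the claim that one can "absorb" $\beta$ and $\eta$ into $Z(s)$ to obtain the clean stated form. Strictly, $(f')^{-1}((Q^{\pi_k}-\eta)/\beta)$ does not equal $\frac{1}{Z}(f')^{-1}(Q^{\pi_k})$ for general $f$; the identity holds cleanly for the exponential family (e.g. $f(x)=x\log x$, where $(f')^{-1}(x)=e^{x-1}$ and shifts/scales factor out), which is the case actually used in~(\ref{eq6}). I would therefore either (i) state the result up to the reparametrization of $Q^{\pi_k}$ by the temperature $\beta$ and constant $\eta$, following~\cite{peters2010relative,peng2019advantage}, or (ii) restrict attention to regularizers $f$ for which this normalization is exact. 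The remaining steps—verifying primal feasibility, dual feasibility $\beta\geq0$, and complementary slackness—are routine, as is confirming that for $V^{\mu^*_k}(s)<V^{\pi_k}(s)$ the dropped constraint leaves the standard policy-improvement update intact.
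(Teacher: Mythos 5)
Your proposal follows essentially the same route as the paper's own proof: form the Lagrangian with multipliers for the $f$-divergence constraint and the normalization, take the (functional) derivative, apply the KKT stationarity condition, and split on the value of the indicator to obtain the two branches. The only difference is that you are more careful than the paper about the step where the multipliers $\beta,\eta$ are ``absorbed'' into $Z(s)$ --- the paper silently writes $(f')^{-1}\bigl(Q^{\pi_k}(s,a)\bigr)$ where stationarity actually gives $(f')^{-1}\bigl((Q^{\pi_k}(s,a)-\alpha)/\lambda\bigr)$, a rescaling that is only exactly absorbable for regularizers such as $f(x)=x\log x$, which is the case the paper ultimately uses in its practical objective.
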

\begin{proof}
Following prior methods~\cite{peters2010relative,peng2019advantage}, we apply the KKT conditions for the constrained optimization problem. The Lagrangian is:
\begin{equation}
\mathcal{L}(\pi,\lambda,\alpha)=\mathbb{E}_{a\sim\pi}[Q^{\pi_k}(s,a)]+\lambda\left[\epsilon-f\left(\frac{\pi(a|s)}{\mu^*(a|s)}\right)\mathbbm{1}\left(V^{\mu^*_k}(s)-V^{\pi_k}(s)\right)\mu^*_k(a|s)\right]+\alpha\left(1-\int_{a\in\mathcal{A}}\pi(a|s)da\right).
\end{equation}
Then, we perform differentiation with respect to $\pi$, and have
\begin{align}
\frac{\partial \mathcal{L}}{\partial \pi}&=Q^{\pi_k}(s,a)-\lambda\bcancel{\mu^*_k(a|s)}\frac{\mathbbm{1}\left(V^{\mu^*_k}(s)-V^{\pi_k}(s)\right)}{\bcancel{\mu^*_k(a|s)}}f'\left(\frac{\pi(a|s)}{\mu^*(a|s)}\right)-\alpha \nonumber\\
&=Q^{\pi_k}(s,a)-\lambda\mathbbm{1}\left(V^{\mu^*_k}(s)-V^{\pi_k}(s)\right)f'\left(\frac{\pi(a|s)}{\mu^*(a|s)}\right)-\alpha
\end{align}

By KKT conditions, we set $\frac{\partial \mathcal{L}}{\partial \pi}=0$. When $V^{\mu^*_k}(s)\geq V^{\pi_k}(s)$, \emph{i.e.}, $\mathbbm{1}\left(V^{\mu^*_k}(s)-V^{\pi_k}(s)\right)=1$, then we have
\begin{equation}
\pi_{k+1}=\frac{1}{Z(s)}\mu^*(a|s)\left(f'\right)^{-1}\Big(Q^{\pi_k}(s,a)\Big),
\end{equation}
where $Z(s)$ is a partition function to normalise the action distribution, and $\lambda$ is a behavior clone weight.

In contrast, if $V^{\mu^*_k}(s)< V^{\pi_k}(s)$, the constraint~(\ref{eq:adative_constraint}) is ineffective. Thus, $\pi_{k+1}$ is an ordinary solution to maximize $Q^{\pi_k}(s,a)$. The proof is completed.
\end{proof}

\clearpage
\begin{proposition}\label{Q_improvement}
Let $\pi_k$ be the older learning policy and the newer one $\pi_{k+1}$ be the solution of~(\ref{eq:optimization_problem})$\sim$(\ref{eq:policy_norm}). Then we achieve $Q^{\pi_{k+1}}(s,a)\geq Q^{\pi_k}(s,a)$ for all $(s,a)\in\mathcal{D}$, with the offline optimal policy $\mu^*_k$ serving as a performance baseline policy.
\end{proposition}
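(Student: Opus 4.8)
The plan is to establish the policy improvement result by the standard soft policy iteration argument, adapted to handle the adaptive constraint. Since we only need $Q^{\pi_{k+1}}(s,a) \geq Q^{\pi_k}(s,a)$ on the support of $\mathcal{D}$, the key observation is that the policy improvement step, in both branches of the adaptive mechanism, guarantees $\mathbb{E}_{a\sim\pi_{k+1}}[Q^{\pi_k}(s,a)] \geq V^{\pi_k}(s) = \mathbb{E}_{a\sim\pi_k}[Q^{\pi_k}(s,a)]$ for every $s \in \mathcal{D}$. First I would verify this one-step inequality: in the unconstrained branch ($V^{\pi_k}(s) > V^{\mu^*_k}(s)$), $\pi_{k+1}$ maximizes $\mathbb{E}_{a\sim\pi}[Q^{\pi_k}(s,a)]$ over all valid distributions, so the inequality holds trivially since $\pi_k$ is feasible; in the constrained branch, one notes that $\pi_k$ itself is feasible (or uses the fact that the KKT solution~(\ref{closed_form}) attains an objective value at least that of $\pi_k$, since $\pi_k$ satisfies the constraint with $f$ chosen appropriately — this point needs a short justification depending on whether $\pi_k \ll \mu^*_k$, which I would address by arguing the closed-form solution is the constrained optimum and $\pi_k$ lies in the feasible set).

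Next I would run the telescoping Bellman iteration. Starting from $Q^{\pi_k}(s,a) = r(s,a) + \gamma\,\mathbb{E}_{s'}[V^{\pi_k}(s')]$, I substitute $V^{\pi_k}(s') \leq \mathbb{E}_{a'\sim\pi_{k+1}}[Q^{\pi_k}(s',a')]$ for each $s' \in \mathcal{D}$ (the reachable next states under the buffer distribution), obtaining
\begin{align}
Q^{\pi_k}(s,a) &\leq r(s,a) + \gamma\,\mathbb{E}_{s',a'\sim\pi_{k+1}}\!\left[Q^{\pi_k}(s',a')\right] \nonumber\\
&= \mathcal{T}^{\pi_{k+1}} Q^{\pi_k}(s,a).
\end{align}
Then I would iterate: applying $\mathcal{T}^{\pi_{k+1}}$ repeatedly and using its monotonicity (if $Q \leq Q'$ pointwise then $\mathcal{T}^{\pi_{k+1}}Q \leq \mathcal{T}^{\pi_{k+1}}Q'$), I get $Q^{\pi_k} \leq \mathcal{T}^{\pi_{k+1}}Q^{\pi_k} \leq (\mathcal{T}^{\pi_{k+1}})^2 Q^{\pi_k} \leq \cdots$, and by Lemma~\ref{Q_converge} the right-hand side converges to the unique fixed point $Q^{\pi_{k+1}}$. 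Hence $Q^{\pi_{k+1}}(s,a) \geq Q^{\pi_k}(s,a)$ on $\mathcal{D}$. The phrase ``with $\mu^*_k$ serving as a performance baseline'' I would interpret as: in the constrained branch, $\pi_{k+1}$ inherits the action distribution of $\mu^*_k$ reweighted by $(f')^{-1}(Q^{\pi_k})$, so the guaranteed value is at least $\max\{V^{\pi_k}(s), V^{\mu^*_k}(s)\}$ whenever $\mu^*_k$ is feasible and near-greedy; I would add a remark to that effect rather than a separate formal claim.

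The main obstacle I anticipate is the constrained branch of the one-step inequality: I need $\pi_k$ to be feasible for constraint~(\ref{eq:adative_constraint}), i.e.\ $\int f(\pi_k/\mu^*_k)\mu^*_k\,da \leq \epsilon$, which is not automatic — it requires $\pi_k$ to be absolutely continuous w.r.t.\ $\mu^*_k$ and $\epsilon$ large enough, or alternatively an argument that the KKT optimum $\pi_{k+1}$ of~(\ref{closed_form}) dominates $\pi_k$ in objective value regardless. I would handle this by noting that the closed-form solution is the global maximizer of $\mathbb{E}_{a\sim\pi}[Q^{\pi_k}]$ over the constraint set, and that for the specific choice $f(x) = x\log x$ used in the practical algorithm the constraint set always contains policies arbitrarily close to the $Q^{\pi_k}$-greedy-within-support policy, so the attained value is at least $V^{\mu^*_k}(s) \geq V^{\pi_k}(s)$ in this branch — which is exactly the inequality needed. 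A secondary subtlety is that the telescoping only controls $Q$ on state-action pairs whose successor states remain in $\mathcal{D}$; since the statement restricts to $(s,a)\in\mathcal{D}$ and the Bellman backups in the algorithm are likewise taken over $\mathcal{D}$, this is consistent, but I would state the in-distribution caveat explicitly.
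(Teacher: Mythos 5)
Your proposal is correct in its overall skeleton and matches the paper's proof in that skeleton: both split into the unconstrained and constrained branches, establish the one-step improvement $\mathbb{E}_{a\sim\pi_{k+1}}[Q^{\pi_k}(s,a)]\geq\mathbb{E}_{a\sim\pi_k}[Q^{\pi_k}(s,a)]$ for $s\in\mathcal{D}$, and then telescope through the Bellman backup exactly as in soft policy improvement. The unconstrained branch is handled identically. Where you genuinely diverge is the constrained branch: you argue via feasibility --- either that $\pi_k$ lies in the constraint set (so the constrained optimum dominates it), or that the feasible set contains near-greedy-within-support policies whose value is at least $V^{\mu^*_k}(s)\geq V^{\pi_k}(s)$ --- whereas the paper instead substitutes the closed-form solution~(\ref{closed_form}) into the objective and runs the chain of inequalities in~(\ref{eq_15}), replacing $\mu^*_k$ by the formal expression $\arg\max_{a\sim\mathcal{D}}Q^{\mu^*_k}(s,a)$ and successively relaxing it to $\pi_k$. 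Your route is more transparent about where the real difficulty sits (feasibility of $\pi_k$ with respect to constraint~(\ref{eq:adative_constraint}) is not automatic, and requires $\pi_k\ll\mu^*_k$ or $\epsilon$ large enough), and the fallback you give --- that in this branch the branch condition $V^{\mu^*_k}(s)\geq V^{\pi_k}(s)$ itself supplies the needed lower bound --- is a reasonable way to close it. The paper's chain in~(\ref{eq_15}) avoids discussing feasibility but at the cost of steps that are themselves informal (treating an $\arg\max$ as a density, and the ``Satisfied in $\mathcal{D}$'' replacement), so neither argument is strictly more rigorous than the other; yours localizes the assumption more cleanly, while the paper's ties the bound directly to the structure of the closed-form solution. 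Your closing caveat about the backups only controlling $Q$ on successor states within $\mathcal{D}$ is a point the paper leaves implicit.
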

\begin{proof}
Since $\pi_{k+1}$ is the solution of~(\ref{eq:optimization_problem})$\sim$(\ref{eq:policy_norm}), we discuss it into two cases:
\begin{itemize}
    \item Unconstrained Optimization Problem
\end{itemize}
In this case, we have $\pi_{k+1}=\arg\max_{\pi}\mathbb{E}_{a\sim\pi}[Q^{\pi_k}(s,a)]$. Thus, it satisfies $\mathbb{E}_{a\sim\pi_{k+1}}[Q^{\pi_k}(s,a)]\geq\mathbb{E}_{a\sim\pi_k}[Q^{\pi_k}(s,a)]$. In a similar way to the proof of the soft policy improvement~\citep{haarnoja2018soft}, we come to the following inequality:
\begin{align}
Q^{\pi_k}(s_t,a_t)&=r(s_t,a_t)+\gamma\mathbb{E}_{s_{t+1},a_{t+1}\sim\pi_k}[Q^{\pi_k}(s_{t+1},a_{t+1})]\nonumber\\
&\leq r(s_t,a_t)+\gamma\mathbb{E}_{s_{t+1},a_{t+1}\sim\pi_{k+1}}[Q^{\pi_k}(s_{t+1},a_{t+1})]\nonumber\\
& \quad\vdots\nonumber\\
&=Q^{\pi_{k+1}}(s_t,a_t)
\end{align}
Thus, we can obtain that $Q^{\pi_{k+1}}(s,a)\geq Q^{\pi_k}(s,a)$.

\begin{itemize}
    \item Constrained Optimization Problem
\end{itemize}
Based on Proposition~\ref{closed_form_appendix}, we have the closed-form solution of $\pi_{k+1}$ as
\begin{equation*}
\pi_{k+1}=\frac{1}{Z(s)}\mu^*(a|s)\left(f'\right)^{-1}\Big(Q^{\pi_k}(s,a)\Big).
\end{equation*}
Note that the definition of the offline optimal policy is $\mu^*_k(a|s)=\arg\max_{a\sim\mathcal{D}} Q^{\mu^*_k}(s,a)$. Then, when the constraint is effective, \emph{i.e.}, $V^{\mu^*_k}(s)\geq V^{\pi_k}(s)$, we can derive that
\begin{align}\label{eq_15}
\mathbb{E}_{a\sim\pi_{k+1}}[Q^{\pi_k}(s,a)]&=\int_{a\sim\mathcal{A}}\pi_{k+1}(a|s)Q^{\pi_k}(s,a)da\nonumber\\
&=\int_{a\sim\mathcal{A}}\frac{1}{Z(s)}\mu^*(a|s)\left(f'\right)^{-1}\Big(Q^{\pi_k}(s,a)\Big)Q^{\pi_k}(s,a)da\nonumber\\
&=\int_{a\sim\mathcal{A}}\frac{1}{Z(s)}\left[\arg\max_{a\sim\mathcal{D}} Q^{\mu^*_k}(s,a)\right]\left(f'\right)^{-1}\Big(Q^{\pi_k}(s,a)\Big)Q^{\pi_k}(s,a)da\nonumber\\
&\geq \int_{a\sim\mathcal{A}}\frac{1}{Z(s)}\left[\arg\max_{a\sim\mathcal{D}} Q^{\pi_k}(s,a)\right]\left(f'\right)^{-1}\Big(Q^{\pi_k}(s,a)\Big)Q^{\pi_k}(s,a)da\nonumber\\
&\geq\int_{a\sim\mathcal{A}}\frac{1}{Z(s)}\pi_k(a|s)\left(f'\right)^{-1}\Big(Q^{\pi_k}(s,a)\Big)Q^{\pi_k}(s,a)da\quad\quad \lhd\ \text{Satisfied in $\mathcal{D}$}\nonumber\\
&\geq \int_{a\sim\mathcal{A}}\pi_k(a|s)Q^{\pi_k}(s,a)da\nonumber\\
&=\mathbb{E}_{a\sim\pi_k}[Q^{\pi_k}(s,a)].
\end{align}

Thus, we reuse the results in the unconstrained optimization problem, we can have $Q^{\pi_{k+1}}(s,a)\geq Q^{\pi_k}(s,a)$.

Combining the results in both cases, we achieve $Q^{\pi_{k+1}}(s,a)\geq Q^{\pi_k}(s,a)$ for all $(s,a)\in\mathcal{D}$.
\end{proof}

\begin{proposition}
Assume $\vert \mathcal{A}\vert < \infty$, repeating the alternation of the policy evaluation~(\ref{evalua_for_Q})$\sim$(\ref{V_evalua}) and policy improvement~(\ref{eq:optimization_problem})$\sim$(\ref{eq:policy_norm}) can make any online learning policy $\pi_k\in\Pi$ converge to the optimal policies $\pi^*$, s.t. $Q^{\pi^*}(s_t,a_t)\geq Q^{\pi_k}(s_t,a_t), \forall (s_t,a_t) \in \mathcal{S}\times \mathcal{A}$.
\end{proposition}
\begin{proof}
Suppose $\Pi$ is the policy set and $\pi_k$ is the policy at iteration $k$. At each iteration, we guarantee the sequence $Q^{\pi_k}$ is monotonically increasing through Proposition~\ref{Q_improvement}. Besides, $\forall (s_t,a_t) \in \mathcal{S}\times \mathcal{A}$, $Q^{\pi_i}$ would converge by repeatedly using the Bellman Expectation Equation as a $\gamma$-contraction mapping, which is proved in Lemma~\ref{Q_converge}. Thus, the sequence of $\pi_k$ converges to some $\pi^*$ that are local optimum. Then, we would show that $\pi^*$ is indeed optimal. Using the same iterative argument as in the proof of Proposition~\ref{Q_improvement}, the optimal policy $\pi^*$ would satisfy that $Q^{\pi^*}(\rvs,\rva) \geq Q^{\pi'}(s,a)$ for all $(s,a)\in \mathcal{S}\times \mathcal{A}$. Hence, $\pi^*$ are optimal in $\Pi$.
\end{proof}

\clearpage
\section{Implementation Details}\label{sec:implementation}
In this section, we delve into the specific implementation details of OBAC. We use the same hyperparameters for all OBAC experiments in this paper. In terms of architecture, we use a simple 2-layer ELU network with a hidden size of 512 to parameterize all components, which contains: a $Q$-value network and a policy network for the online learning policy $\pi$; a $Q$-value network and a $V$-value network for the offline optimal policy $\mu^*$. 

Specifically, to encourage online learning policy exploration, we utilize a max-entropy framework~\cite{haarnoja2018softappli} for $\pi$ with automatic temperature tuning.
\begin{table}[ht]
\caption{The hyperparameters of OBAC}
\label{table:hypermeter}
\centering
\begin{tabular}{@{}lll@{}}
\toprule
\multicolumn{1}{c}{\multirow{11}{*}{\centering OBAC Hyperparameters}} & Hyperparameter                    & Value     \\ \cmidrule(lr){2-3}
                                                                    & Optimizer                         & Adam      \\
                                                                    & Critic learning rate              & 3e-4      \\
                                                                    & Actor learning rate               & 3e-4      \\
                                                                    & Discount factor                   & 0.99      \\
                                                                    & Mini-batch                        & 512       \\
                                                                    & Actor Log Std. Clipping           & $(-20,2)$ \\
                                                                    & Replay buffer size                & 1e6
                                                                    \\ 
                                                                    & Expectile factor $\tau$          & 0.9
                                                                    \\ 
                                                                    & Behavior clone weight $\lambda$          & 0.001\\
                                                                    \cmidrule(lr){1-3}
\multicolumn{1}{c}{\multirow{3}{*}{\centering Architecture $\times 4$}}        & Network hidden dim                 & 512       \\
                                                                    & Network  hidden layers              & 2         \\
                                                                    & Network  activation function        & elu       \\
                                                                    \bottomrule
\end{tabular}
\end{table}

\subsection{Hyper-parameters}
In all of our experiments, we use a single set of hyper-parameters with $\tau=0.9$ and $\lambda=0.001$. Here, a big expectile factor $\tau$ can approach the maximization of offline $Q$ value in the replay buffer, thus enabling better online policy learning. To balance the training stability and optimality of offline policy, we choose $\tau=0.9$. For the behavior clone weight $\lambda$ which is from the solution within the KKT condition, we can apply dual gradient descent for auto-tuning in principle, while we find a fixed $\lambda=0.001$ can achieve satisfied performance. Besides, several offline RL works~\cite{kumar2019stabilizing,fujimoto2021minimalist} also use fixed weight for policy constraint. Thus, we apply a fixed $\lambda$ in our works.

\subsection{Baselines and Environments}
In our experiments, we have implemented SAC, TD3 and TD-MPC2 using their original code bases to ensure a fair and consistent comparison.
\begin{itemize}[left=5pt]
    \item For SAC~\citep{haarnoja2018soft}, we utilized the open-source PyTorch implementation, available at \url{https://github.com/pranz24/pytorch-soft-actor-critic}.
    \item TD3~\citep{fujimoto2018addressing} was integrated into our experiments through its official codebase, accessible at \url{https://github.com/sfujim/TD3}.
    \item TD-MPC2~\citep{hansen2023td} was employed with its official implementation from \url{https://github.com/nicklashansen/tdmpc2}.
\end{itemize}

For BAC~\cite{ji2023seizing}, we reproduce the proposed BEE operator. Specifically, the Bellman Exploitation operator $\mathcal{T}^\mu_{exploit}$ is implemented by IQL~\cite{kostrikov2021offline}, and the Bellman Exploration operator $\mathcal{T}^\pi_{explore}$ with the entropy exploration term is based on SAC~\cite{haarnoja2018soft}, both of which are suggested in its original paper. We choose the trade-off hyper-parameter in BAC as $0.5$ and expectile factor $0.7$ aligning with its most suggestion.

We use the official setting of each task domain, including the reward setting, the task horizon, the \textit{done} signal and the original state-action spaces.

\clearpage
\section{More Experimental Results}
We provide complete experimental results to show the superiority of OBAC.

\subsection{Task Visualization}
\begin{figure}[htbp]
    \centering
    \includegraphics[width=\textwidth]{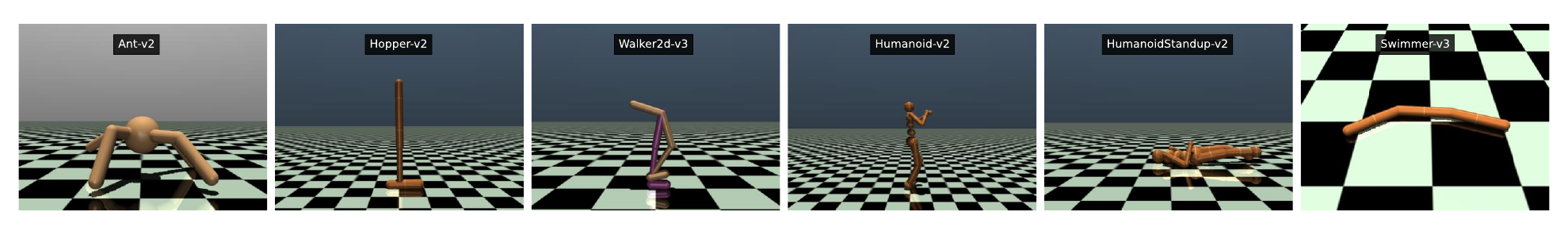}
    \caption{Visualization of tasks in \textbf{Mujoco}.}
    \label{fig:mujoco_visualition}
\end{figure}
\begin{figure}[htbp]
    \centering
    \includegraphics[width=\textwidth]{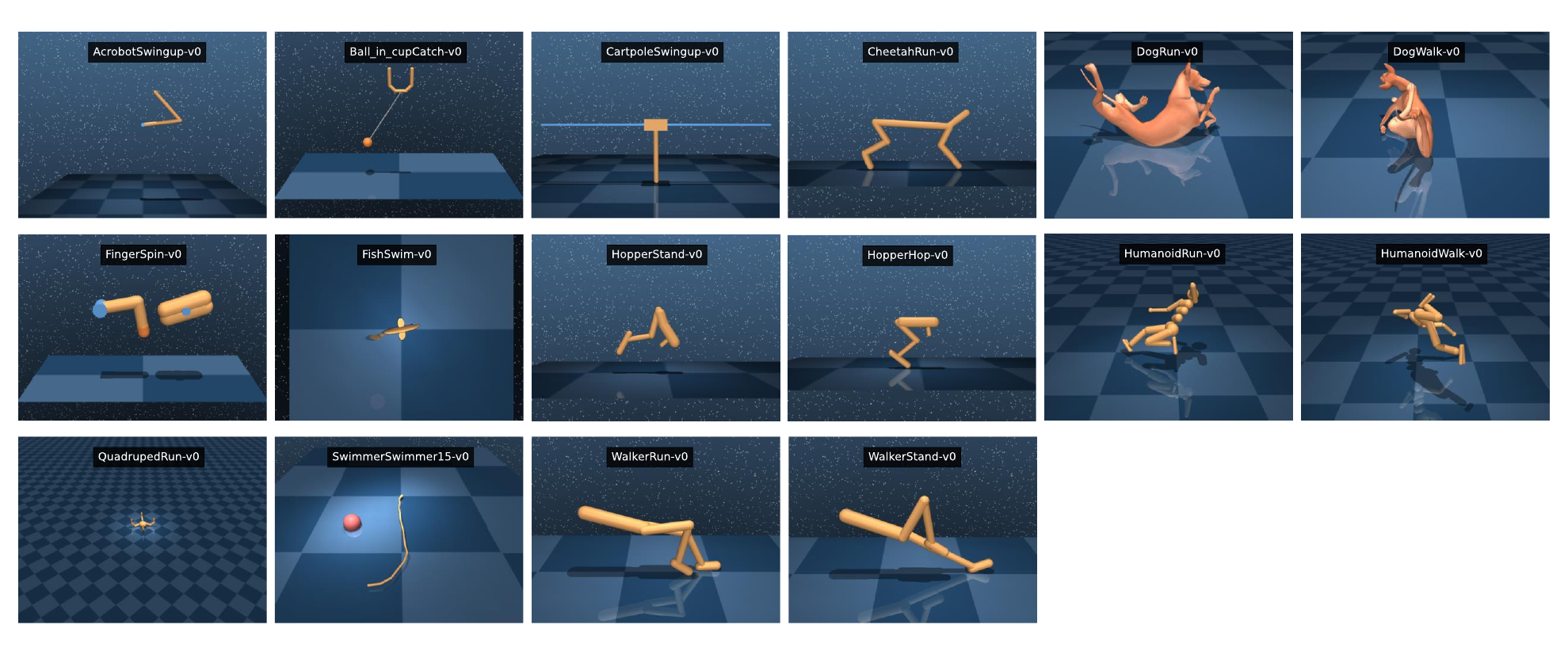}
    \caption{Visualization of tasks in \textbf{DM Control}.}
    \label{fig:dmc_visualition}
\end{figure}
\begin{figure}[htbp]
    \centering
    \includegraphics[width=\textwidth]{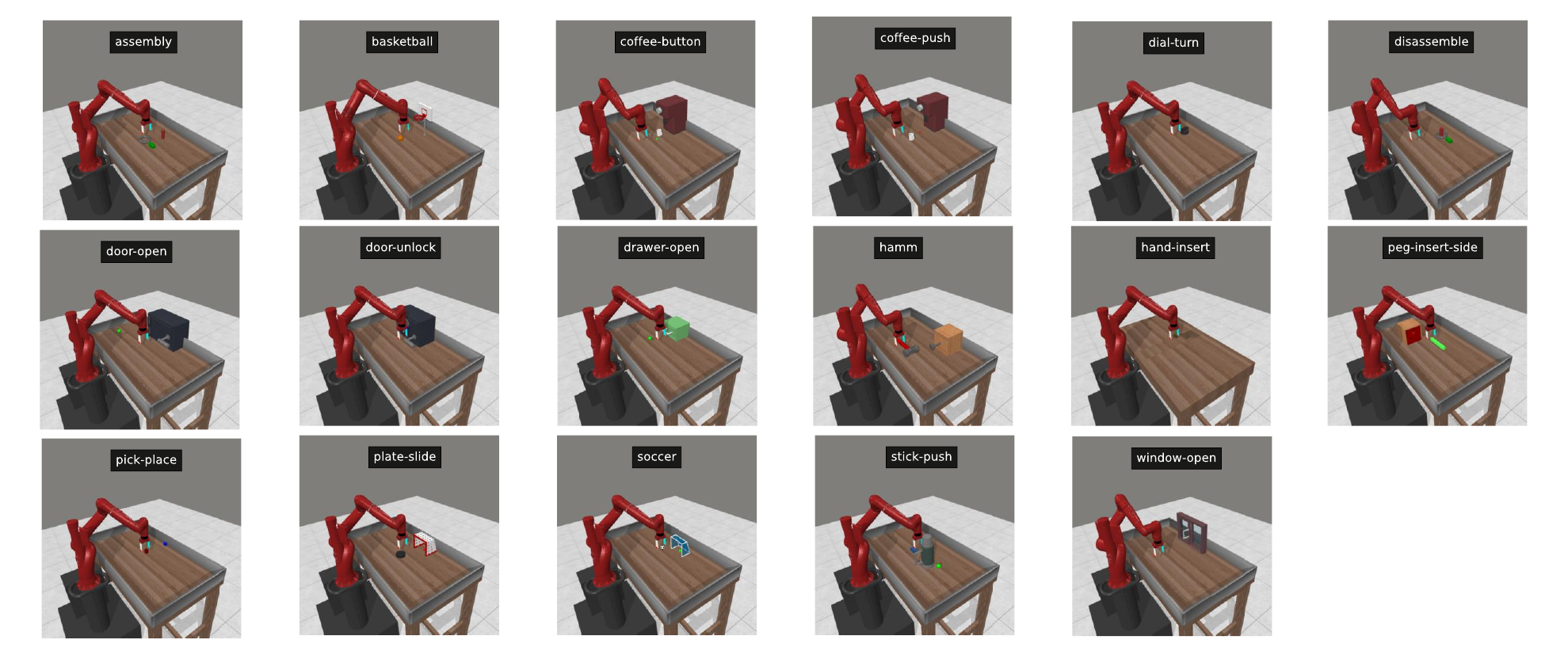}
    \caption{Visualization of tasks in \textbf{Meta-World}.}
    \label{fig:meta_world_visualition}
\end{figure}
\begin{figure}[htbp]
    \centering
    \includegraphics[width=\textwidth]{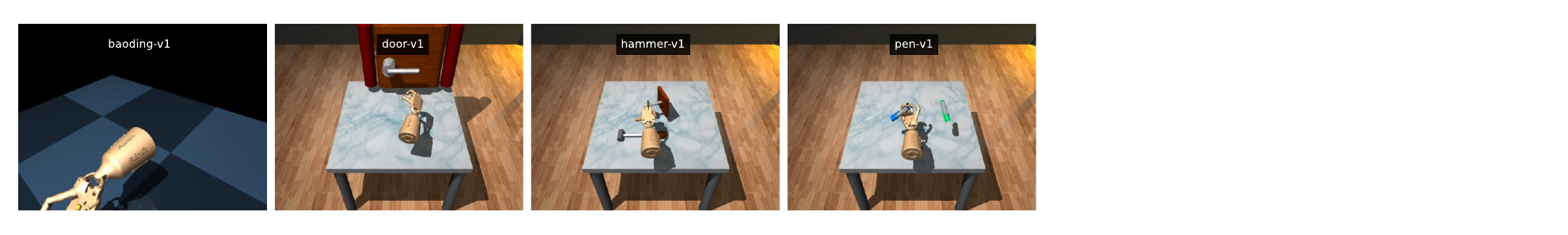}
    \caption{Visualization of tasks in \textbf{Adroit}.}
    \label{fig:adroit_visualition}
\end{figure}
\begin{figure}[htbp]
    \centering
    \includegraphics[width=\textwidth]{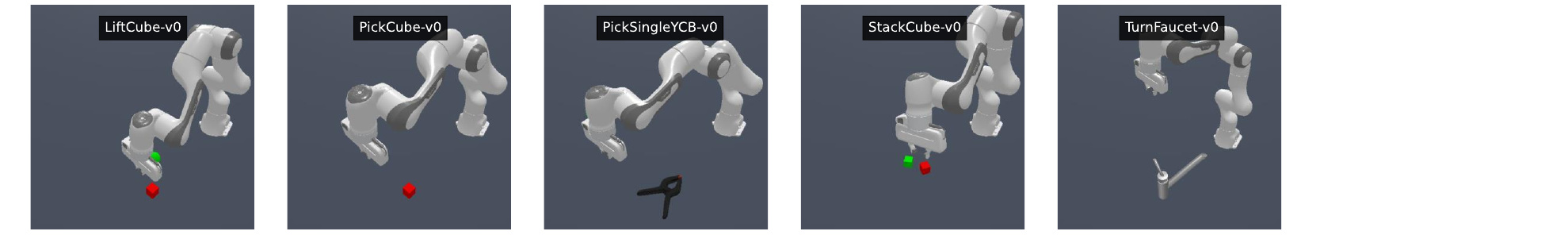}
    \caption{Visualization of tasks in \textbf{Maniskill2}.}
    \label{fig:maniskill_visualition}
\end{figure}
\begin{figure}[htbp]
    \centering
    \includegraphics[width=\textwidth]{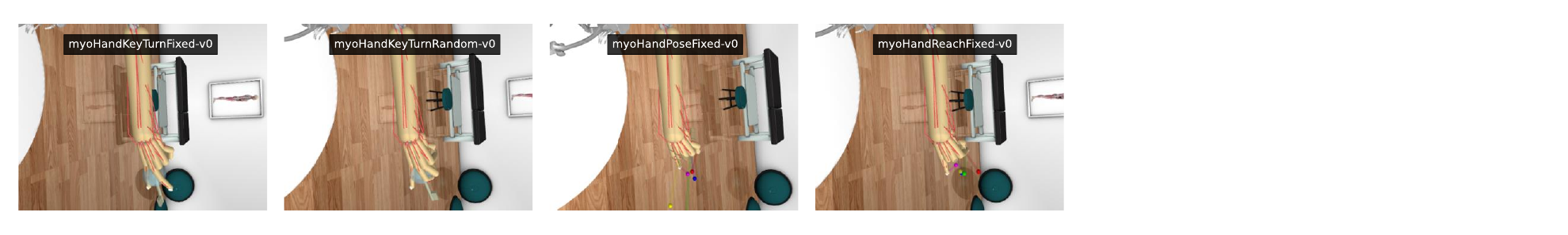}
    \caption{Visualization of tasks in \textbf{Myosuite}.}
    \label{fig:myosuite_visualition}
\end{figure}

\clearpage
\subsection{Complete Experimental Results}\label{sec:all_results}
We note that, since the task-specific \textit{done} signal setting, TD-MPC2 may perform poorly in Mujoco suite, especially for Ant, Hopper, Walker2d and Humanoid, where the episode may be terminated early if the robot falls. In such cases, TD-MPC2 can not find a feasible policy to prevent the fall thus achieving limited performance. Besides, we did not find the results of Mujoco in TD-MPC2's paper.

However, in the other suites, we find TD-MPC2 can perform well even within unseen tasks in its original paper. Thus, we think the reproduction results are reasonable. Note that, in these suites, the episode would be done only when the task horizon comes to an end, which may provide more exploration information compared with Mujoco suite.

For the consideration of fair comparison, we follow the official setting of each task suite when evaluating all algorithms.
\begin{figure}[htbp]
    \centering
    \includegraphics[width=\textwidth]{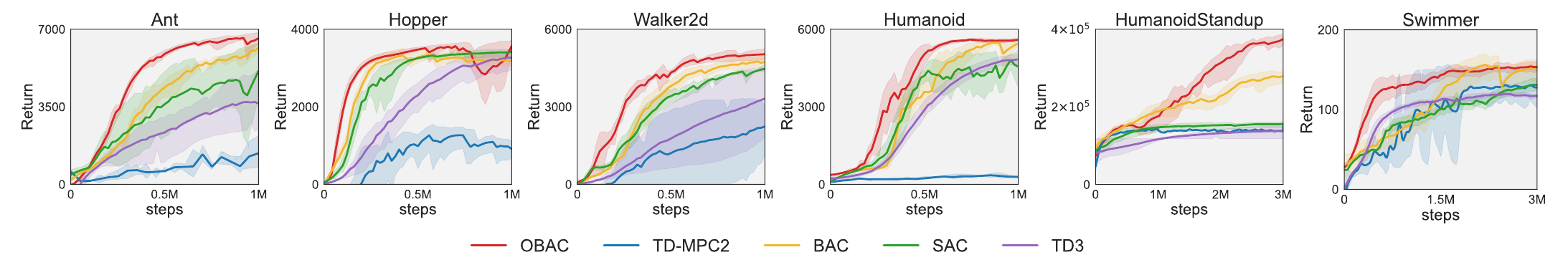}
    \caption{The results of \textbf{6} tasks in \textbf{Mujoco}.}
    \label{fig:mujoco_results}
\end{figure}
\begin{figure}[htbp]
    \centering
    \includegraphics[width=\textwidth]{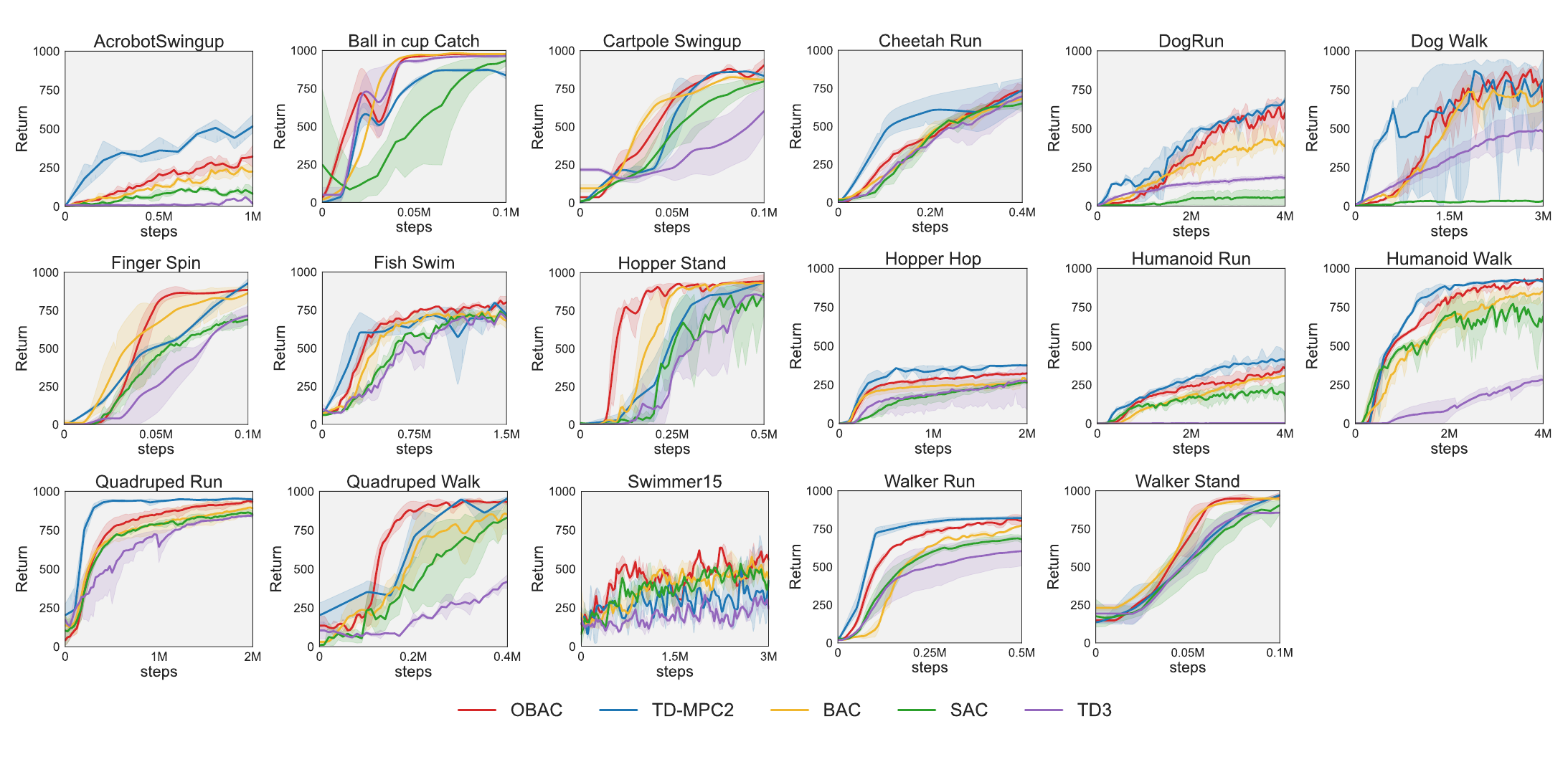}
    \caption{The results of \textbf{17} tasks in \textbf{DM Control}.}
    \label{fig:dmc_results}
\end{figure}
\begin{figure}[htbp]
    \centering
    \includegraphics[width=\textwidth]{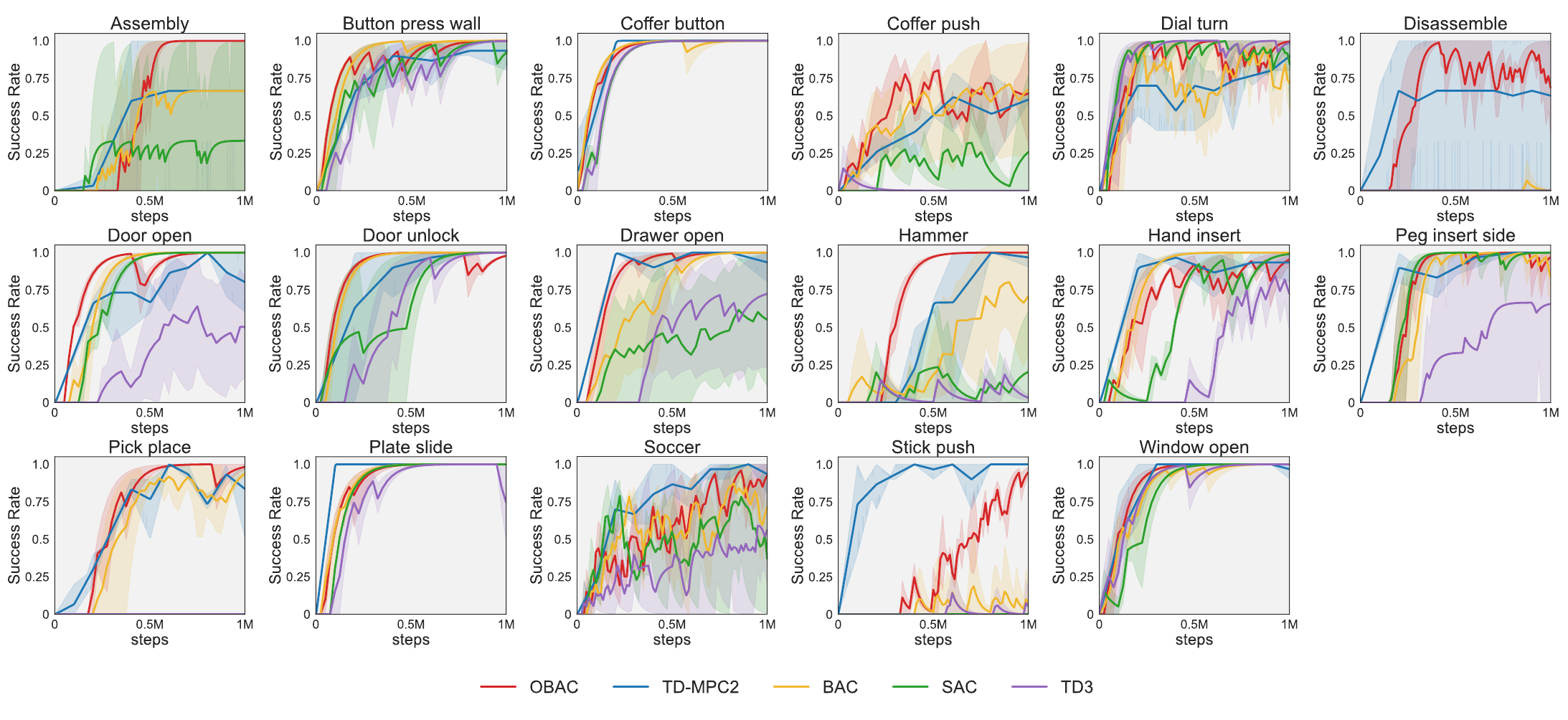}
    \caption{The results of \textbf{17} tasks in \textbf{Meta-World}.}
    \label{fig:metaworld_results}
\end{figure}
\begin{figure}[htbp]
    \centering
    \includegraphics[width=\textwidth]{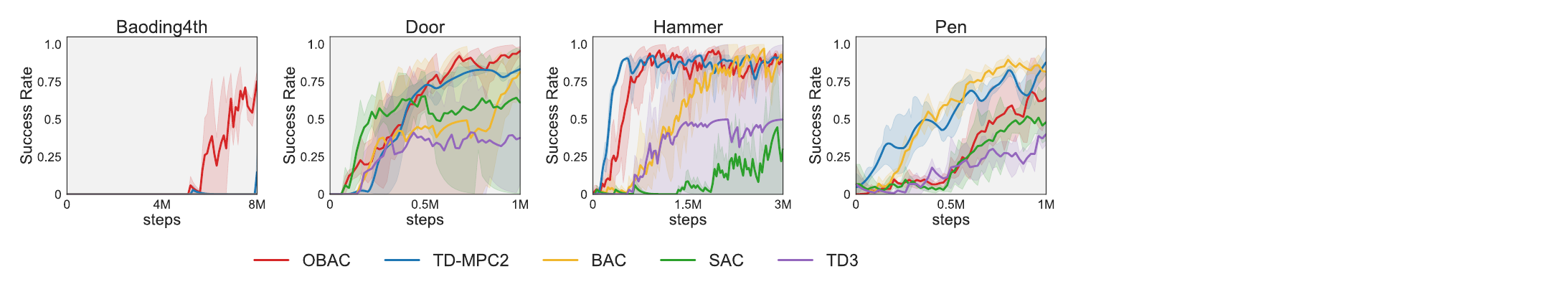}
    \caption{The results of \textbf{4} tasks in \textbf{Adroit}.}
    \label{fig:adroit_results}
\end{figure}
\begin{figure}[htbp]
    \centering
    \includegraphics[width=\textwidth]{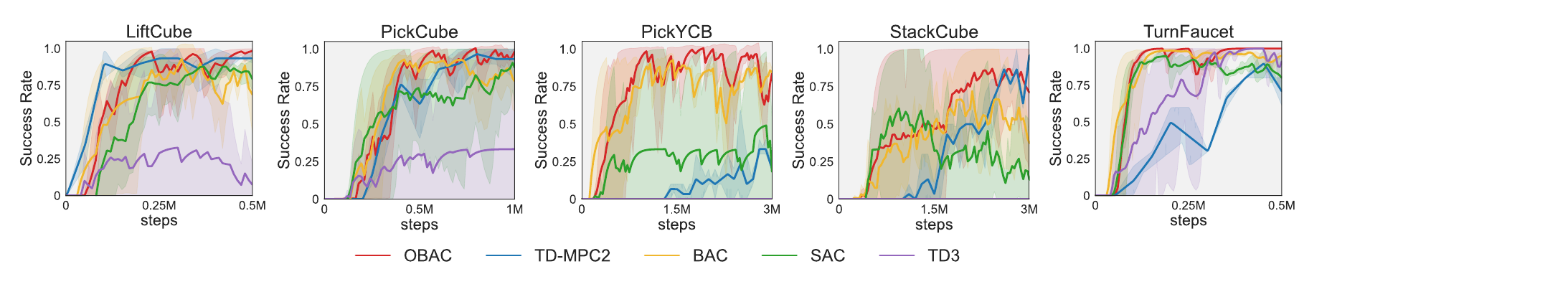}
    \caption{The results of \textbf{5} tasks in \textbf{Maniskill2}.}
    \label{fig:maniskill_results}
\end{figure}
\begin{figure}[htbp]
    \centering
    \includegraphics[width=\textwidth]{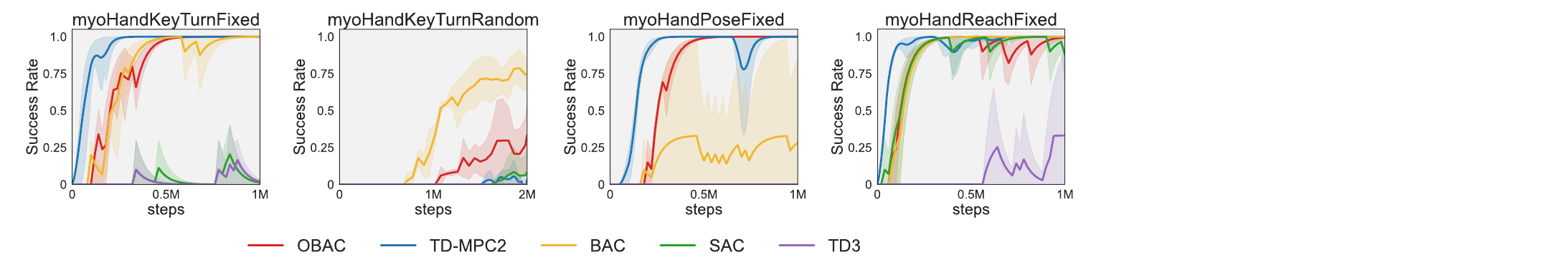}
    \caption{The results of \textbf{4} tasks in \textbf{Myosuite}.}
    \label{fig:myosuite_results}
\end{figure}

\clearpage
\subsection{Performance comparison under noise tasks}\label{appendix_noise}
To better explain the robustness of OBAC, we provide the performance decline rates of OBAC and baselines here, whose results are similar to Figure~\ref{fig:noise_ablation}. Our results show that OBAC exhibits a lower performance decline than the baselines.
\begin{itemize}
    \item Given the same noise level ($\sigma=0.1$)
    \begin{table}[H]
        \caption{Performance comparison of OBAC and baselines with the same noise level}
        \centering
        \begin{tabular}{cccc}
        \toprule
        \textbf{Task (success rate)}& \textbf{OBAC} & \textbf{BAC} & \textbf{SAC} \\
        \midrule 
        Hammer ($\sigma=0$) & 0.9 & 0.9 & 0.4 \\
        Hammer ($\sigma=0.1$) & 0.8 & 0.75 & 0.35 \\
        \textbf{Decline rate} & $\mathbf{11.11\%}$ & $16.67\%$ & $12.5\%$ \\
        \midrule
        Pen ($\sigma=0$) & 0.6 & 0.75 & 0.5 \\
        Pen ($\sigma=0.1$) & 0.45 & 0.25 & 0.15 \\
        \textbf{Decline rate} & $\mathbf{25.00\%}$ & $66.67\%$ & $70.00\%$ \\
        \bottomrule
        \end{tabular}
    \end{table}
    \item Given the different noise level ($\sigma=0.1$ and $\sigma=0.05$)
    \begin{table}[H]
        \caption{Performance comparison of OBAC with different noise levels}
        \centering
        \begin{tabular}{cc}
        \toprule
        \textbf{Task (success rate)}& \textbf{OBAC} \\
        \midrule 
        Hammer ($\sigma=0.05$) & 0.85 \\
        Hammer ($\sigma=0.1$) & 0.8 \\
        \textbf{Decline rate} & $\mathbf{5.88\%}$ \\
        \midrule 
        Pen ($\sigma=0.05$) & 0.5  \\
        Pen ($\sigma=0.1$) & 0.45 \\
        \textbf{Decline rate} & $\mathbf{10.00\%}$ \\
        \bottomrule
        \end{tabular}
    \end{table}
\end{itemize}

\clearpage
\section{Additional Ablations}\label{sec:more_ablation}
Except for the ablation studies in the main paper, we additionally provide the results of OBAC's variant to assess OBAC completely.

\paragraph{Variant of OBAC.}
In our implementation, we employ the stochastic Gaussian policy for online policy learning. On the other side, we derive a variant of OBAC by using a deterministic policy for the online learning policy. Thus, the policy objective can be
\begin{align}\label{eq:update_pi_determin}
\arg\min_{\pi_\theta\in\Pi}\mathbb{E}_{s\sim\mathcal{D}}\Big\{\mathbb{E}_{a\sim\pi_\theta}[-Q^{\pi_k}(s,a)]-\lambda\mathbb{E}_{a\sim\mathcal{D}}\left[{\color{blue}(\pi_\theta(s)-a)^2}\right]\mathbbm{1}\left(V^{\mu^*_k}(s)-V^{\pi_k}(s)\right)\Big\}.
\end{align}

We conduct several experiments on such a variant. The results show that our method can also 
\begin{figure}[htbp]
    \centering
    \includegraphics[width=0.25\textwidth]{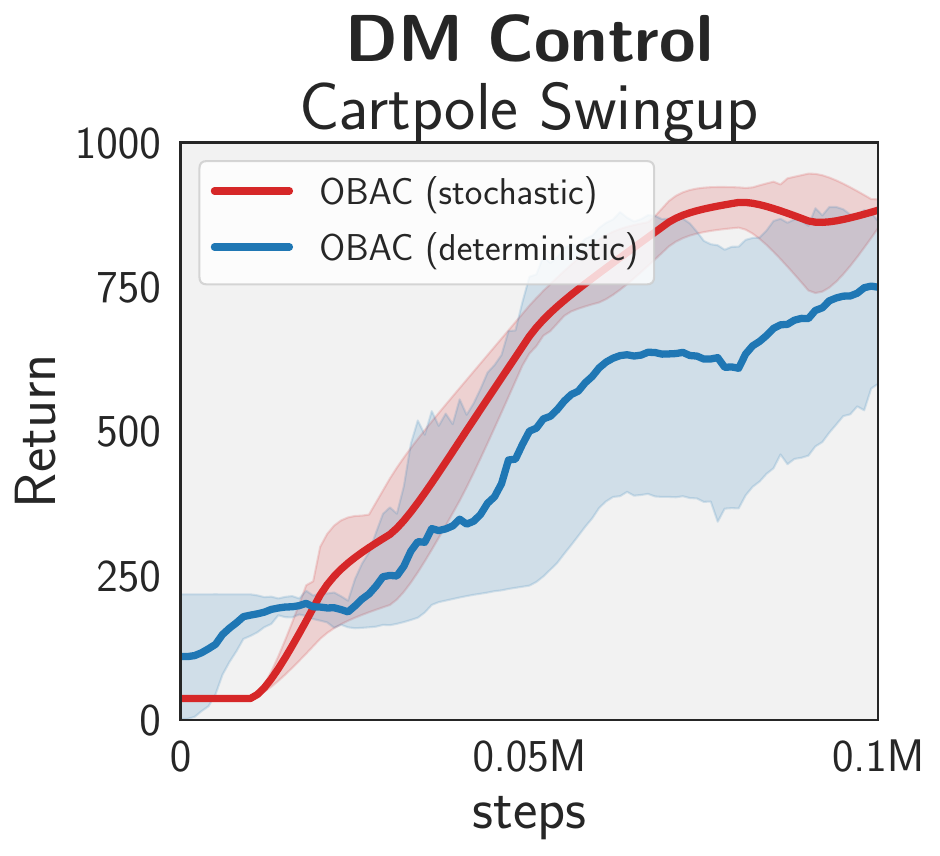}
    \includegraphics[width=0.25\textwidth]{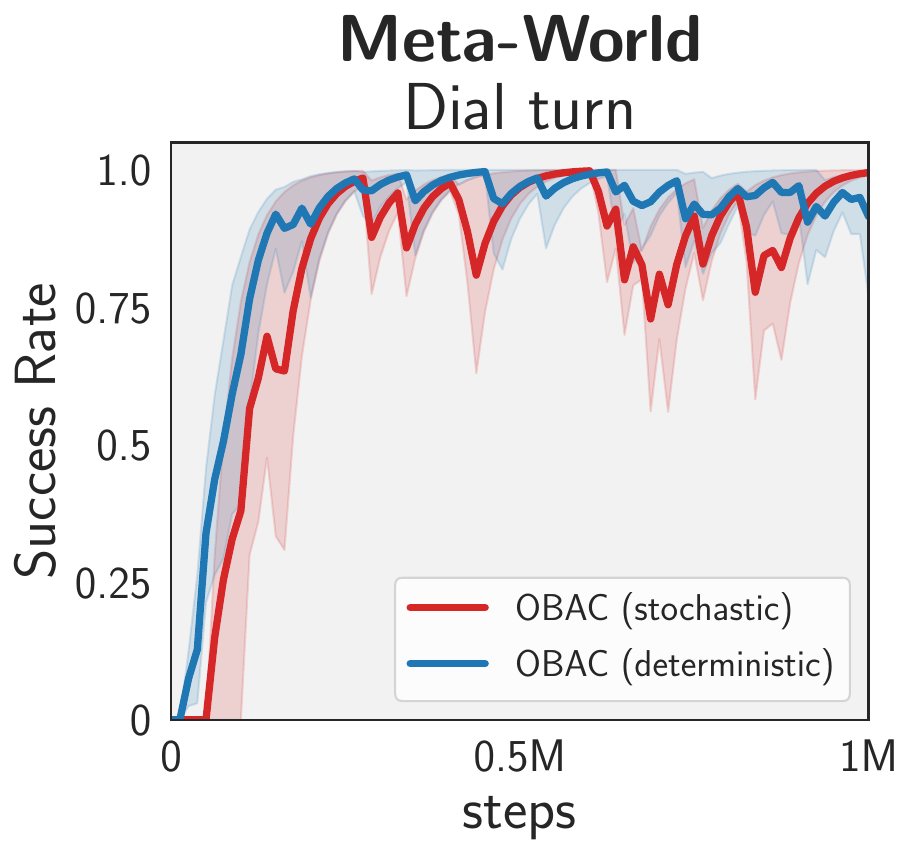}
    \includegraphics[width=0.25\textwidth]{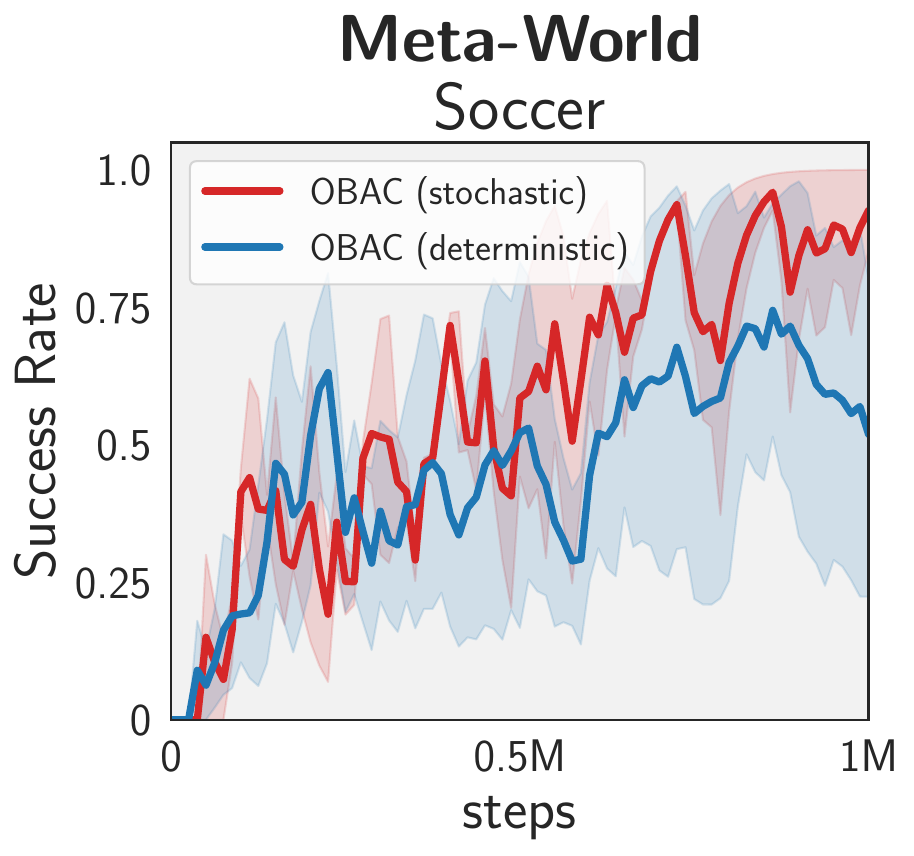}
    \caption{Comparison of OBAC and its variant.}
    \label{fig:variant_results}
\end{figure}

\end{document}